\title{Self-Play with Adversarial Critic: \\
{\Large Provable and Scalable Offline Alignment for Language Models}}
\author{
  Xiang Ji \\
  Princeton University \\ \small \href{mailto:xiangj@princeton.edu}{\texttt{xiangj@princeton.edu}}
  \And 
  Sanjeev Kulkarni \\
  Princeton University \\ \small \href{mailto:kulkarni@princeton.edu}{\texttt{kulkarni@princeton.edu}}
  \And 
  Mengdi Wang\thanks{Corresponding authors}\\
  Princeton University \\ \small \href{mailto:mengdiw@princeton.edu}{\texttt{mengdiw@princeton.edu}}
  \And 
  Tengyang Xie\footnotemark[1]\\
  UW-Madison \\ \small \href{mailto:tx@cs.wisc.edu}{\texttt{tx@cs.wisc.edu}}
}
\begin{document}

\maketitle

\begin{abstract}
    This work studies the challenge of aligning large language models (LLMs) with offline preference data. We focus on alignment by Reinforcement Learning from Human Feedback (RLHF) in particular. While popular preference optimization methods exhibit good empirical performance in practice, they are not theoretically guaranteed to converge to the optimal policy and can provably fail when the data coverage is sparse by classical offline reinforcement learning (RL) results. On the other hand, a recent line of work has focused on theoretically motivated preference optimization methods with provable guarantees, but these are not computationally efficient for large-scale applications like LLM alignment. To bridge this gap, we propose \alg, a new offline preference optimization method with self-play, inspired by the on-average pessimism technique from the offline RL literature, to be the first provable and scalable approach to LLM alignment. We both provide theoretical analysis for its convergence under single-policy concentrability for the general function approximation setting and demonstrate its competitive empirical performance for LLM alignment on a 7B Mistral model with Open LLM Leaderboard evaluations. 
\end{abstract}

\allowdisplaybreaks

\section{Introduction}

Recent advances in large language models (LLMs) have greatly strengthened their ability at a multitude of high-level, complex instruction-following tasks, such as coding, summarization, and solving mathematical problems \citep{stiennon2020learning,li2023textbooks}. However, there still remains the challenge of ensuring the content generated by LLMs is aligned with human values and preferences \citep{bai2022constitutional,touvron2023llama}, especially in regards to safety, helpfulness, etc. Specifically, the goal is to make LLMs generate an appropriate response upon receiving any prompt. To achieve this, Reinforcement Learning from Human Feedback (RLHF) is predominantly used. Standard RLHF approaches use human feedback to construct a real-valued reward function that quantifies the desirability of a prompt-response pair and optimize LLM parameters so that the model generates in a way maximizing such reward. 

In practice, data collection can be slow and costly and is usually done before the alignment process, so RLHF is commonly formulated as an offline reinforcement learning (RL) problem in the literature. Through the lens of offline RL, common RLHF methods \citep{christiano2017deep,rafailov2024direct,bai2022constitutional} can be viewed as RL methods that first learn the empirically best reward function from the offline data and then optimize on it for a policy. However, despite their great empirical performance, it is unclear whether these popular alignment methods are guaranteed to converge to the optimal policy. In fact, by standard offline RL theory \citep{zhu2023principled,wang2020statistical}, methods that greedily optimize the empirically best reward or value function can provably fail from converging to the optimal policy when the data coverage is sparse, which is the case for LLM alignment in practice. Several works in the alignment literature have also identified the distribution shift in the training data as a factor for reward hacking \citep{rame2024warm,pan2022effects,miao2024mitigating}.

In response to this issue, a few theoretically motivated preference optimization methods have been proposed in a recent line of work. These methods all make use of the pessimism principle from the offline RL literature \citep{jin2021pessimism,li2024settling}. In particular, by estimating a pointwise lower confidence bound for the latent reward function and optimizing the policy with respect to this reward lower bound, these pessimistic algorithms are theoretically guaranteed to converge to the optimal policy even under sparse data coverage, namely, when the offline data only satisfies single-policy concentrability. Only requiring the offline data to cover where an optimal policy can visit, this is one of the weakest data coverage assumption in the offline RL literature. In fact, it is unlikely that stronger coverage assumptions can be satisfied in practice, e.g., assuming the data cover where any possible policy can visit, or even assuming full coverage of the entire state-action space, which methods based on the empirically best estimator requires to converge provably. Under this modest data assumption, \citet{zhu2023principled} and \citet{xiong2023gibbs} study the linear setting, but their algorithms and theoretical guarantees heavily rely on the linear assumption and are not directly applicable to the general setting. \citet{zhan2024provable} propose {\sf FREEHAND} that handles offline preference optimization for general function approximation, but the algorithm is computationally hard as the authors noted. The computational difficulty for finding the confidence bound or region under general function approximation is well-known; while these works pioneer the study of provable methods for offline preference optimization, there remains a considerable gap between them and large-scale applications like LLM alignment.

In this work, we propose a new approach for offline preference optimization, named \algfull (\alg), that is not only \textbf{theoretically guaranteed} to converge to the optimal policy under the single-policy concentrability assumption but also \textbf{scalable} to general function classes like LLMs \textbf{at the same time}. Our approach is inspired by \citet{xie2021bellmanconsistent,cheng2022adversarially} from the offline RL literature: we formulate offline preference optimization as a Stackelberg game, in which there exists a dueling dynamics between the learner that optimizes policy with respect to a pessimistic reward estimate and the critic that maintains the pessimistic reward under the learner’s policy. Specifically, instead of estimating a pointwise lower bound on the latent reward, our method maintains an on-average pessimistic reward estimate, i.e., a lower bound on the reward expectation under the learner's policy. Such on-average pessimism is appealing because it allows for this particular Stackelberg game to be solved by a single-timescale iterative \textbf{self-play} algorithm, when coupled with a change-of-variable trick similar to the one in Direct Policy Optimization ({\sf DPO}) \citep{rafailov2024direct}. In addition, \alg can be easily implemented on top of an existing RLHF codebase due to its close connection to {\sf DPO}. In the following sections, we not only present the theoretical result showing \alg converges to the optimal policy under single-policy concentrability with high probability, but also put \alg to test in a 7B LLM alignment experiment and demonstrate the competitive performance of \alg against popular alignment baselines in practice.

In summary, our work makes the following contributions:
\begin{enumerate}[label=(\roman*)]
    \item We propose the \textbf{first} offline preference optimization algorithm that both provably converge to a near-optimal policy under only single-policy concentrability and is computationally scalable to general function approximation at the same time, and provide a non-asymptotic analysis on its suboptimality;

    \item We propose a practical implementation of our algorithm, turning our double-timescale algorithm into a single-timescale direct preference optimization algorithm, which is easy to implement based on existing RLHF codebases;

    \item We confirm the competitive performance of \alg by performing alignment on 7B LLM and evaluating the finetuned models on the widely-used Open LLM Benchmarks against popular baselines.
\end{enumerate}

\subsection*{Related Work}

\paragraph{LLM alignment.} Typically, the process of LLM alignment is comprised of two stages: supervised fine-tuning (SFT) and RLHF. SFT usually precedes RLHF and aims to align LLM with a training set containing human demonstration responses, usually by causal language modeling \citep{mishra2021cross,wei2021finetuned}. The standard RLHF approach \citep{christiano2017deep} involves learning a reward function from a set of human preference data and optimizes the LLM with respect to it using RL methods such as PPO \citep{schulman2017proximal}. Under this pipeline, lightweight reward maximization methods such as rejection sampling are also used \citep{touvron2023llama,li2024q,gao2024rebel}, designed to reduce the computation needed to align the policy model given a learned reward function. As a popular alternative, contrastive RLHF methods are also introduced besides the aforementioned {\sf DPO} \citep{hejna2023contrastive,xu2024contrastive}, which learn the policy model directly and obviate the need to learn a separate reward model. 

While we consider the offline setting, in which a fixed training set of preference data is given, there is also a body of literature that studies the online setting for alignment, in which the model can query newly-annotated preference data during the training process \citep{rosset2024direct,dwaracherla2024efficient,calandriello2024human}. However, the requirement of fresh annotations at every iteration is costly in practice.

\paragraph{Self-play.} Self-play describes the type of approaches that searches for the solution to a problem by staging the algorithm to play against itself, usually iteratively. Rooted in game theory \citep{brown:fp1951,Heinrich2015FictitiousSI}, the concept of self-play has found many applications in modern machine learning, e.g. AlphaGo Zero \citep{alphago}, Generative Adversarial Networks (GAN) \citep{goodfellow2014generative}, multi-agent reinforcement learning (MARL) \citep{bansal2017emergent,baker2019emergent}, where the problem is innately multiplayer or can be formulated so.
Self-play is also commonly used in adversarial imitation learning (also known as apprenticeship learning) and inverse reinforcement learning \citep[esp.,][]{abbeel2004apprenticeship,syed2007game,ho2016generative}, which closely aligns with this work. The objective here is to maximize performance in comparison to expert behavior, with the reward function being adversarially selected with respect to this goal.
Recently, self-play has caught increasing attention in the literature for LLM alignment, as some game-theoretic formulations of RLHF and thus self-play methods for them have been proposed, such as Self-Play Preference Optimization ({\sf SPO}) \citep{swamy2024minimaximalist}, Self-Play fIne-tuNing ({\sf SPIN}) \citep{chen2024self}, and Direct Nash Optimization ({\sf DNO}) \citep{rosset2024direct}.

\paragraph{Offline RL.} Offline RL has been a well-studied area, both empirically \citep[e.g.,][]{fujimoto2019off,kumar2020conservative,levine2020offline,fujimoto2021minimalist} and theoretically. 
One key algorithmic insight heavily used in offline RL is the use of adversarial critics or models to establish pessimistic estimates \citep[e.g.,][]{kumar2020conservative,xie2021bellmanconsistent,cheng2022adversarially,bhardwaj2024adversarial}, which is also closely related to this paper.
As for the theoretical side, it is extensively studied in the tabular case \citep{pmlr-v162-shi22c,xie2021policy,yan23efficacy}, where the state and action spaces of the environment are discrete. Notably, a method with minimax-optimal sample complexity is introduced and analyzed in \citet{li2024settling}. Methods with function approximation are also studied. A population line of research considers linear function approximation with a known linear feature \citep{pmlr-v119-duan20b,jin2021pessimism,zanette2021provable}. In efforts to be more realistic, several works study the setting with general function approximation \citep{xie2021bellmanconsistent,cheng2022adversarially,yin2022offline,zhan2022offline}, including neural function approximation \citep{fan2020theoretical,nguyen2021sample}.

\paragraph{Preference-based Reinforcement Learning.} We mainly review the works that study the theory of Preference-based Reinforcement Learning (PbRL) methods. Besides the works for the offline setting we have mentioned earlier, there are existing results about the online PbRL setting, e.g., for the tabular case \citep{novoseller2020dueling,xu2020preference} as well as the linear and general function approximations \citep{pacchiano2021dueling,chen2022human}. Related yet different, dueling bandit is another PbRL setting that has been well-studied \citep{dudik2015contextual,mehta2023kernelized,saha2022efficient}.

\paragraph{Notation} 

For any scalar $a$, we denote the sigmoid function with $\sigma(a) := 1/(1+e^{-a})$. We use $\norm{\cdot}_\infty$ to denote the infinity norm of a real-valued function. For any random sample $Z$ with distribution $P$ and a function $f(\cdot)$ of $Z$, we denote the expectation of $f(Z)$ over $P$ with $\EE_{Z\sim P}[f(Z)]$ or $\EE_{Z}[f(Z)]$. For a set $\cY$, $\Delta(\cY)$ denotes the probability simplex over $\cY$. Finally, we adopt the convention $0/0 = 0$.

\section{Preliminaries}

\paragraph{Offline RL} The RLHF portion for LLM alignment is predominantly formulated as offline RL in the literature, and due to the autoregressive nature of LLM, such offline RL can be reduced to an offline contextual bandit \citep{ouyang2022training}, which we represent with $(\cX, \cY, r^\star, \rho)$. The function $r^\star: \cX \times \cY \to \RR$ represents the latent reward function, which is assumed to be deterministic and unknown. In particular, $r(x,y)$ is the reward of response $y\in \cY$ given prompt $x \in \cX$, where $\cX$ is referred to as the state space and $\cY$ the action space. $\rho$ denotes the initial state distribution of the bandit. A policy $\pi: \cX\to\Delta(\cY)$ specifies a rule for response selection given a prompt, where $\pi(\cdot ~|~ x) \in \Delta(\cY)$ represents the response selection probability vector for prompt $x \in \cX$. The value function of policy $\pi$ at $x$ is defined as
\begin{align*}
    V^\pi(x) := \EE_{y\sim\pi(\cdot | x)}\left[r(x,y)\right].
\end{align*}
The goal is to find a policy, which would be an LLM for human alignment, whose response $y$ maximizes the reward $r(x,y)$ on any prompt $x$. Such a policy is referred to as an optimal policy $\pi^\star$, which maximizes the value function for all states simultaneously.

We evaluate the performance of a policy $\pi$ by the suboptimality metric defined as follow:
\begin{align}\label{eq:suboptimality-def}
    \subopt(\pi) := \EE_{x\sim\rho}\left[V^{\pi^\star}(x) - V^{\pi}(x)\right].
\end{align}
Suboptimality measures the performance difference between the optimal policy $\pi^\star$ and a given policy $\pi$. Naturally, one aims to minimize the suboptimality and find an algorithm whose suboptimality converges to zero as sample size approaches infinity. 

\paragraph{Offline preference optimization} Given an offline preference dataset $\cD$, the standard RLHF procedure consists of two steps \citep{christiano2017deep}: rewarding learning and RL optimization. During reward learning, the goal is to estimate the latent reward from the preference dataset $\cD$. A data point from $\cD$ includes a prompt $x$ and a pair of responses $(y^\w, y^\lo)$. Here, $(y^\w, y^\lo) \sim \pi_{\rref}(\cdot~|~x)$, where $\pi_{\rref}$ is a reference policy after SFT finetuning, and $y^\w$ is preferred to $y^\lo$ given prompt $x$, which is denoted with $y^\w \succ y^\lo ~|~ x$. Such human preference is commonly assumed to follow the Bradley-Terry (BT) model \citep{Bradley1952RankAO} that depends on the latent reward function $r^\star$, which satisfies
\begin{equation}\label{eq:BTL}
    P(y^\w \succ y^\lo ~|~ x) = \frac{\exp(r^\star(x,y^\w))}{\exp(r^\star(x,y^\w)) + \exp(r^\star(x,y^\lo))}.
\end{equation}

This allows one to use maximum-likelihood estimation (MLE) on the the preference dataset $\cD = \{(x_i,y_i^\w, y_i^\lo)\}_{i=1}^{n}$ to estimate the reward function. In particular, one can obtain the empirically best estimated reward function $\widehat{r}$ from minimizing the empirical negative log-likelihood loss over a function class $\cF$:
\begin{equation}\label{eq:PPO-MLE}
    \widehat{r} \leftarrow \min_{f\in\cF}-\frac{1}{n}\sum_{i=1}^n \log \frac{\exp(f(x_i,y_i^\w))}{\exp(f(x_i,y_i^\w)) + \exp(f(x_i,y_i^\lo))}.
\end{equation}

With $\widehat{r}$ learned, the RL optimization step is about finding a policy that maximizes the following objective:
\begin{equation}\label{eq:PPO}
    \widehat{\pi} \leftarrow \max_{\pi\in\Pi} \EE_{x\sim \cD, y\sim\pi(\cdot|x)}\left[\widehat{r}(x,y) - \alpha D_{\mathrm{KL}}\big(\pi(\cdot~|~x)~||~\pi_{\rref}(\cdot~|~x)\big)\right].
\end{equation}
The KL divergence regularization above is meant to control the deviation of the output policy from the reference policy $\pi_{\rref}$, where $\alpha$ is a hyperparameter. This prevents the RL optimization from exploiting any misestimation within the learned reward and causing reward hacking, but at the same time restricts the output policy $\widehat{\pi}$ to the neighborhood of $\pi_{\rref}$ and keeps it from converging to the actual optimal policy. 

Besides the standard two-step approach, \cite{rafailov2024direct} proposed a popular alternative for RLHF named Direct Preference Optimization (DPO). It uses the analytical solution of \cref{eq:PPO} to transform \cref{eq:PPO-MLE} into the following objective:
\begin{equation}\label{eq:DPO}
    \widehat{\pi} \leftarrow \max_{\pi\in\Pi} \EE_{x\sim \cD, y\sim\pi(\cdot|x)}\left[\widehat{r}(x,y) - \alpha D_{\mathrm{KL}}\big(\pi(\cdot~|~x)~||~\pi_{\rref}(\cdot~|~x)\big)\right],
\end{equation}
which effectively combines the RL optimization step with reward learning.

\section{\algfull (\alg)}\label{sec:alg}

Our algorithm is inspired by the philosophy that offline RL can be formulated as a Stackelberg game \citep{von2010market}, which was also studied in \citet{cheng2022adversarially,bhardwaj2024adversarial}. Slightly different from the formulation in \citet{bhardwaj2024adversarial}, where reward samples are available and Bellman error can be computed, we formulate preference optimization as the following bilevel optimization problem:
\begin{align}
    \widehat{\pi} &\in \arg\max_{\pi} \cL_\cD(\widehat{f}, \pi) \label{eq:stackelberg-policy}\\
    \text{s.t.}\ \widehat{f} &\in \arg\min_{f\in\cF} \cL_\cD(f, \pi) + \lambda\cE_\cD(f), \label{eq:stackelberg-critic}
\end{align}
where $\lambda \ge 0$ is a hyperparameter, and 
\begin{align}\label{eq:L-def}
    \cL_\cD(f, \pi) := \frac{1}{n}\sum_{j=1}^{n}f(x_j,y_j) - f(x_j,y_j'),
\end{align}
where $\{x_j\}_{j=1}^n$ are prompts from the preference dataset $\cD$. For every $j\in\{1,\cdots,n\}$, $y_j'$ is a response generated by the reference policy $\pi_{\rref}$ on $x_j$, which can be taken from the preference dataset $\cD$, and $y_j \sim \pi(\cdot ~|~ x_j)$ is a response newly generated from $\pi$ on $x_j$. In addition, we define
\begin{align}\label{eq:E-def}
    \cE_\cD(f) := -\frac{1}{n}\sum_{i=1}^n \log \frac{\exp(f(x_i,y_i^\w))}{\exp(f(x_i,y_i^\w)) + \exp(f(x_i,y_i^\lo))},
\end{align}
which is the negative log likelihood of $f$ on the samples from $\cD$.

In this bilevel optimization formulation, \cref{eq:stackelberg-policy} seeks the learner policy $\pi$ as the leader in the Stackelberg game and attempts to find $\widehat{\pi}$ under which the value predicted with the critic $\widehat{f}$ is maximized; \cref{eq:stackelberg-critic} seeks the critic $f$ in function class $\cF$ as the follower in the Stackelberg game and attempts to find $\widehat{f}$ by performing an on-average pessimistic policy evaluation relative to $\widehat{\pi}$ while maximizing the log likelihood of $\widehat{f}$ with respect to $\cD$. 

However, while there have been existing results that formulate the alignment problem with bilevel optimization \citep{chakraborty2024parl,shen2024principled}, bilevel optimization is computationally difficult to solve in general. Luckily, this bilevel optimization above can be solved with an iterative algorithm. Specifically, we propose an iterative algorithm (\cref{alg:theoretical}) and replace the policy optimization step in \cref{eq:stackelberg-policy} with mirror descent update, which is a common no-regret policy optimization method in the online RL literature \citep{doi:10.1287/moor.1090.0396,agarwal2021theory,cai2020provably,geist2019theory}. Note that the algorithm plays the actor and critic at the same time in a self-play manner. As we will show in later sections, the output policy of \cref{alg:theoretical} provably converges to the optimal policy under the weak single-policy concentrability assumption on the coverage of preference data $\cD$.
\begin{algorithm}[t]
	\caption{\algtfull (\algt) \label{alg:theoretical}}
	\textbf{Initialize:} preference data $\cD$, initial policy $\pi_{\init}$, hyperparameters $\lambda$ and $\eta$, reward function class $\cF$.
 \begin{algorithmic}[1]
    \State $\pi_1 \leftarrow \pi_{\init}$;
	\For{iteration $t = 1, \cdots, T-1$}
        \State With $\cL_\cD$ define in \eqref{eq:L-def} and $\cE_\cD$ in \eqref{eq:E-def},
        \begin{center}
            $f_{t} \leftarrow \arg\min_{f\in \cF} \cL_\cD(f, \pi_t) + \lambda \cE_\cD(f);$ 
        \end{center}\label{alg-line:theory-line1}

        \State $\forall (x,y)\in\cX\times\cY$, \begin{center}
            $\pi_{t+1}(y~|~x) \propto \pi_t(y~|~x)\exp(f_t(x,y)/\eta);$
        \end{center} \label{alg-line:theory-line2}
        \EndFor
        
    \noindent\Return $\widehat{\pi} := \mathrm{Unif}(\pi_{[1:T]})$;
\end{algorithmic}
\end{algorithm}

Despite its iterative procedure, \cref{alg:theoretical} still has a few shortcomings: (i) a reward model $f_t$ needs to be kept at every iteration besides the policy model, leading to a higher memory cost; (ii) the exponential weighting in \cref{alg-line:theory-line2} cannot be computed exactly for language models; (iii) its two-timescale update can result in delicate stability in practice. Fortunately, we can further improve the scalability of our algorithm by eliminating these three undesirables. Specifically, since \cref{alg-line:theory-line2} can be written as
\begin{equation}
\label{eq:def_dpo_f}
    f_t(x,y) = \eta\log\frac{\pi_{t+1}(y|x)}{\pi_{t}(y|x)} + \log Z_t(x)
\end{equation}
with $Z_t(x) = \sum_y\pi_t(y~|~x)\exp(f_t(x,y)/\eta)$, we can plug this into \cref{alg-line:theory-line1} similar to the derivation of {\sf DPO} \citep{rafailov2024direct}. This gives us a new algorithm \algfull (\alg, \cref{alg:practical}), which is an equivalent but more practical version of \cref{alg:theoretical}. 

In this practical implementation, we no longer refer to the reward function class $\cF$ explicitly, as the reward function is now implicit and $\cF$ is implied by the set of reward functions that can be expressed with policy from $\Pi$ as in \cref{eq:def_dpo_f}. In addition, the choice of $y_j'$ can be flexible. It is acceptable to use the set of chosen responses $\{y_j^\w\}_{j=1}^n$ or the set of rejected responses $\{y_j^\lo\}_{j=1}^n$ from $\cD$. We can even make use of both chosen responses and rejected responses by computing the log density ratio $\log\frac{\pi(y_j'|x_j)}{\pi_t(y_j'|x_j)}$ in \cref{alg-line:prac-line1} of \cref{alg:practical} by averaging $\log\frac{\pi(y_j^\w|x_j)}{\pi_t(y_j^\w|x_j)}$ and $\log\frac{\pi(y_j^\lo|x_j)}{\pi_t(y_j^\lo|x_j)}$, which is how we implement \alg for our experiments. Furthermore, as the pseudocode shows, \alg is closely related to {\sf DPO} and {\sf SPIN} and thus easy to implement based on any RLHF codebase with {\sf DPO} or {\sf SPIN} already implemented. As shown in the experiment section later, \alg can be shown to outperform state-of-the-art alignment methods on LLMs.
\begin{algorithm}[t]
	\caption{\algfull (\alg) \label{alg:practical}}
	\textbf{Initialize:} preference data $\cD$, initial policy $\pi_{\init}$, hyperparameters $\lambda$ and $\eta$, policy class $\Pi$.
 \begin{algorithmic}[1]
    \State $\pi_1 \leftarrow \pi_{\init}$;
\For{iteration $t = 1, \cdots, T-1$}
    \State $\forall j \in\{1,\cdots,n\}$, generate a response $y_j\sim\pi_t(\cdot~|~ x_j)$ and let $y_j' = y_j^\w$;
    \small
    \begin{align*}
    \pi_{t+1} &\leftarrow \arg\min_{\pi\in\Pi} \frac{1}{n}\sum_{j=1}^{n}\eta\left(\log\frac{\pi(y_j|x_j)}{\pi_t(y_j|x_j)} - \log\frac{\pi(y_j'|x_j)}{\pi_t(y_j'|x_j)}\right) \\ & \qquad\qquad\qquad\qquad 
    - \frac{\lambda}{n}\sum_{i=1}^n \log\sigma\left(\eta\log\frac{\pi(y_i^\w|x_i)}{\pi_t(y_i^\w|x_i)} - \eta\log\frac{\pi(y_i^\lo|x_i)}{\pi_t(y_i^\lo|x_i)}\right);
    \end{align*} \label{alg-line:prac-line1}
    \normalsize
\EndFor

\noindent\Return $\widehat{\pi} := \mathrm{Unif}(\pi_{[1:T]})$;
\end{algorithmic}
\end{algorithm}

\section{Theoretical Guarantees of \alg}

An important advantage of \alg is that it can be mathematically proved to converge to the optimal policy under weak assumption on the offline preference data, which we present in this section. For our theory, we first make a set of standard assumptions on the latent reward function and the function class our algorithm uses.
\begin{assumption}[Realizable, bounded reward]\label{assum:reward}
    Assume $r^\star \in \cF$ and there exists $R > 0$ such that any $f\in\cF$ satisfies $\norm{f}_\infty \le R$.
\end{assumption}
Recall that $\cF$ is the implicit reward function class that is induced from the policy class $\Pi$ in \cref{alg:practical} by \cref{eq:def_dpo_f}. Note that we assume realizability only for simplicity. Our theory can be modified to the unrealizable setting by considering the bias-variance tradeoff between the approximation error and complexity of $\cF$. Similarly, we assume that $\cF$ is finite but can be exponentially large and use log-cardinality to measure its statistical complexity. Our theory can also extend to continuous function classes by replacing the log-cardinality with appropriate notions of covering number.
\begin{assumption}[Coverage]\label{assum:coverage}
    Assume 
    \begin{equation*}
        \Cfrak := \sqrt{\sup_{f\in\cF}\frac{\EE_{\substack{x\sim\rho\\ y_1\sim\pi^\star(\cdot|x),y_2\sim\pi_{\rref}(\cdot|x)}}\Big[r^\star(x,y_1) - r^\star(x,y_2) - f(x,y_1) + f(x,y_2)\Big]^2}{\EE_{\substack{x\sim\rho\\ y_1\sim\pi_{\rref}(\cdot|x),y_2\sim\pi_{\rref}(\cdot|x)}}\Big[\big(r^\star(x,y_1) - r^\star(x,y_2) - f(x,y_1) + f(x,y_2)\big)^2\Big]}} < \infty
    \end{equation*}
\end{assumption}
A unique challenge in offline RL is the distribution mismatch between the sampling distribution of the offline dataset and the visitation distribution for the policy to be learned. It is desirable that an algorithm make less assumption on the data coverage and be able to operate in a broader set of scenarios. There are a few different types of assumption on the offline data sampling distribution in the offline RL literature. The strongest assumes full coverage, i.e., the entire state-action space $\cX\times\cY$ needs to be covered \citep{pmlr-v130-yin21a,xie2021batch}. Another strong assumption requires the support of the data sampling distribution to cover everywhere that any policy can possibly visit \citep{Antos2006LearningNP,JMLR:v9:munos08a}. However, these assumptions are not possible in modern machine learning applications with high-dimensional and continuous spaces, like LLM alignment. In contrast, we consider one of the weakest assumptions called single-policy concentrability, which only requires that the data sampling distribution covers the visitation of the optimal policy over a function space induced by $\cF$. Moreover, our characterization for distribution mismatch is tighter than the standard density ratio for any $\cF$ or $\pi^\star$:
\begin{align*}
    \Cfrak \le \norm{\frac{\pi^\star}{\pi_{\rref}}}_\infty := \sup_{x,y}\frac{\pi^\star(y~|~x)}{\pi_{\rref}(y~|~x)}.
\end{align*}
An important disadvantage of popular preference optimization methods in practice is that they have no provable guarantee for finding the optimal policy unless under full data coverage. This is because these methods simply take the empirically best reward estimator obtained from MLE and optimize the policy with respect to it directly. As classical RL theory suggests \citep{rashidinejad2021bridging,zhu2023principled,wang2020statistical}, one can always find a problem instance in which such policy obtained from optimizing the empirically best value function suffers constant suboptimality with constant probability. As mentioned in the introduction, this is believed to be a factor behind the reward hacking phenomenon in the alignment literature, where policy optimization exploits the parts in the learned reward function that are ill-estimated because they are not sufficiently covered in the training data. In LLM alignment, an extra KL regularization is usually implemented to mitigate this undesirable phenomenon, by confining the policy model to a trustworthy reference policy model throughout optimization. However, while this approach gives acceptable policies in practice, it fundamentally changes the optimization objective and in general keeps the policy model from converging to the actual optimal policy.

Finally, the theoretical guarantee of \alg can be summarized in the following theorem.
\begin{theorem}\label{thm:main}
    Assume the $n$ samples in $\cD$ are i.i.d. Suppose \cref{assum:reward,assum:coverage} hold. Let $c_{\mathrm{KL}} := \EE_{x\sim\rho}\left[D_{\mathrm{KL}}\big(\pi^\star(\cdot|x)~||~\pi_{\init}(\cdot|x)\big)\right]$ and $\kappa := 1/(\inf_{z\in[-R,R]} \sigma'(z))$. Choose any $\delta \in (0,1)$. Suppose $\eta = \Theta(\sqrt{\frac{R^2T}{c_{\mathrm{KL}}}})$ and $\lambda = \Theta(\Cfrak\sqrt{\frac{n}{\kappa^2\log( |\Pi|/\delta)}})$. Then with probability at least $1-\delta$, the output policy $\widehat{\pi}$ of \alg achieves
    \begin{align*}
        \subopt(\widehat{\pi}) \le \Ocal\left(\Cfrak\kappa\sqrt{\frac{1}{n}\log\frac{|\Pi|}{\delta}} + R\sqrt{\frac{1}{n}\log\frac{|\Pi|}{\delta}} + R\sqrt{\frac{c_{\mathrm{KL}}}{T}}\right)
    \end{align*}
    for an absolute constant $C_0 > 0$.
\end{theorem}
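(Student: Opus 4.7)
The plan is to analyze the theoretical version \algt (\cref{alg:theoretical}), which is equivalent to \cref{alg:practical} in the paper, and bound the averaged regret $\frac{1}{T}\sum_{t=1}^{T}\EE_{x\sim\rho}\bigl[V^{\pi^\star}(x) - V^{\pi_t}(x)\bigr]$, which by linearity upper bounds $\subopt(\widehat{\pi})$. The central step is a critic-pivoted decomposition: for each $t$,
\begin{align*}
V^{\pi^\star}(x) - V^{\pi_t}(x) = \underbrace{\EE_{\pi^\star}[r^\star - f_t] + \EE_{\pi_t}[f_t - r^\star]}_{\text{statistical error}} + \underbrace{\EE_{\pi^\star}[f_t] - \EE_{\pi_t}[f_t]}_{\text{no-regret error}},
\end{align*}
where $f_t$ is the critic from \cref{alg-line:theory-line1}. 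The optimization term is exactly the regret of the KL-regularized exponential-weights update in \cref{alg-line:theory-line2} against the comparator $\pi^\star$, with losses $-f_t \in [-R,R]$. The textbook mirror-descent bound with the stated $\eta = \Theta(\sqrt{R^2 T / c_{\mathrm{KL}}})$ delivers $\frac{1}{T}\sum_t(\EE_{\pi^\star}[f_t] - \EE_{\pi_t}[f_t]) \le O(R\sqrt{c_{\mathrm{KL}}/T})$, which is the third term of the stated bound.

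For the statistical term, I would add and subtract $\EE_{y\sim\pi_{\rref}}[r^\star - f_t]$ to rewrite it in pairwise-difference form:
\begin{align*}
\EE_{x\sim\rho}\Bigl[\EE_{y_1\sim\pi^\star, y_2\sim\pi_{\rref}}[\Delta_t(x,y_1,y_2)] - \EE_{y_1\sim\pi_t, y_2\sim\pi_{\rref}}[\Delta_t(x,y_1,y_2)]\Bigr],
\end{align*}
where $\Delta_t(x,y_1,y_2) := (r^\star - f_t)(x,y_1) - (r^\star - f_t)(x,y_2)$. The first double expectation is controlled by Cauchy--Schwarz and \cref{assum:coverage} as $\Cfrak$ times the $\pi_{\rref}\times\pi_{\rref}$ root-mean-square of $\Delta_t$. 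The second is the population version of $\cL_\cD(f_t - r^\star, \pi_t)$, which is where the on-average pessimism kicks in: optimality of $f_t$ together with realizability $r^\star \in \cF$ yields $\cL_\cD(f_t - r^\star, \pi_t) \le \lambda\bigl(\cE_\cD(r^\star) - \cE_\cD(f_t)\bigr)$, reducing its control to an excess BT log-loss bound.

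Two concentration facts then close the argument. First, uniform concentration of $\cE_\cD$ over the implicit reward class (with log-complexity $\log|\Pi|$ induced through \cref{eq:def_dpo_f}) gives an excess-MLE bound of order $\log(|\Pi|/\delta)/n$; strong convexity of $z\mapsto -\log\sigma(z)$ on $[-2R,2R]$ with modulus $\kappa^{-1}$ upgrades this into the $L^2$ bound $\EE_{\pi_{\rref}\times\pi_{\rref}}[\Delta_t^2] \le O\bigl(\kappa^2 \log(|\Pi|/\delta)/n\bigr)$. Second, a uniform Hoeffding/Bernstein step converts $\cL_\cD(f_t - r^\star, \pi_t)$ into its expectation, contributing the $R\sqrt{\log(|\Pi|/\delta)/n}$ term. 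Plugging these in and choosing $\lambda = \Theta\bigl(\Cfrak\sqrt{n/(\kappa^2\log(|\Pi|/\delta))}\bigr)$ balances the $\Cfrak$-weighted RMS against the pessimism slack and produces the $\Cfrak\kappa\sqrt{\log(|\Pi|/\delta)/n}$ term. Summing over $t$, averaging, and adding the mirror-descent term yields the claim.

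The main obstacle I anticipate is the passage from the excess BT log-loss to a squared-error bound on the reward-difference $\Delta_t$ under $\pi_{\rref}\times\pi_{\rref}$. The log-likelihood only directly controls a Hellinger/TV-type distance between BT preference probabilities, and it is the sigmoid strong-convexity constant $\kappa$ that mediates the conversion and explains its appearance in the theorem. A secondary subtlety is the empirical-to-population step for $\cL_\cD(f_t - r^\star, \pi_t)$: since $\pi_t$ and the fresh rollouts $y_j \sim \pi_t(\cdot\mid x_j)$ are data-dependent, one needs a union bound over an appropriate cover of the induced function/policy class together with a concentration inequality that accommodates the on-policy samples.
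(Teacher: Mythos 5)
Your proposal is correct and follows essentially the same route as the paper's proof: the same decomposition (your ``statistical error'' after adding and subtracting the $\pi_{\rref}$ reference term is exactly the paper's $I_{t,1}+I_{t,3}-I_{t,4}$, and your ``no-regret error'' is $I_{t,2}$), the same Cauchy--Schwarz/coverage step with AM--GM balancing via $\lambda$, the same use of on-average pessimism against the realizable $r^\star$, the same $\kappa$-mediated conversion from excess BT log-loss to squared reward differences, and the same mirror-descent regret bound. The two subtleties you flag are handled in the paper just as you anticipate: the log-loss-to-$L^2$ passage goes through a TV-distance MLE lemma plus the mean value theorem on the sigmoid (yielding the $\kappa^2$ factor), and the data-dependence of $\pi_t$ in the empirical-to-population step is absorbed by a union bound over $\cF$ together with (martingale) Hoeffding.
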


\paragraph{Proof sketch} The suboptimality of {\sf SPAC} mainly comes from two sources: the statistical error in learning with $\cD$ (\cref{alg-line:theory-line1} of \cref{alg:theoretical}) and the optimization error in the our mirror descent update (\cref{alg-line:theory-line2} of \cref{alg:theoretical}). We can decompose the suboptimality into four terms as follows:
\begin{align*}
    \subopt(\widehat{\pi}) &= \frac{1}{T}\sum_{t=1}^T\Big(\underbrace{\EE_{\rho\times\pi^\star}\left[r^\star(x,y) - f_{t}(x,y)\right] - \EE_{\rho\times\pi_{\rref}}\left[r^\star(x,y) - f_{t}(x,y)\right]}_{I_{t,1}}\notag\\ &\quad + \underbrace{\EE_{\rho\times\pi^\star}[f_{t}(x,y)] - \EE_{\rho\times\pi_t}[f_{t}(x,y)]}_{I_{t,2}} + \underbrace{\EE_{\rho\times\pi_t}\left[f_{t}(x,y)\right] - \EE_{\rho\times\pi_{\rref}}[f_{t}(x,y)]}_{I_{t,3}}\notag\\ &\quad - \underbrace{\left(\EE_{\rho\times\pi_t}\left[r^\star(x,y)\right] - \EE_{\rho\times\pi_{\rref}}[r^\star(x,y)]\right)}_{I_{t,4}}\Big).
\end{align*}
The statistical error is associated with $I_{t,1}$, $I_{t,3}$ and $I_{t,4}$. By a change of measure and the mean value theorem, $I_{t,1}$ can be turned into the total variation distance between the Bernoulli distributions corresponding to the true BT model induced by $r^\star$ and a fictitious BT model induced by $f_t$. This can be further bounded by the log likelihood ratio of the two Bernoulli distributions, resulting in 
\begin{equation*}
    I_{t,1} \lesssim \cE_\cD(f_t) - \cE_\cD(r^\star) + \frac{\kappa^2}{n}\log\frac{|\cF|}{\delta}.
\end{equation*}
Furthermore, it can be combined with $I_{t,1}$ and $I_{t,4}$ to recover the critic's optimization objective $\cL_\cD(f, \pi_t) + \lambda \cE_\cD(f)$, which can be further bounded by $\cL_\cD(r^\star, \pi_t) + \lambda \cE_\cD(r^\star)$ due to pessimism. Then we can arrive at the final bound on the statistical error via a concentration argument.

On the other hand, notice the sum of $I_{t,2}$ coincides with the classic notion of cumulative regret in online learning. To bound this, we use a regret analysis similar to the one for Follow the Regularized Leader (FTRL) but using the reverse KL divergence from $\pi_{\init}$ as the regularizer instead, giving 
\begin{equation*}
\sum_{t=1}^{T}\EE_{\rho\times\pi^\star}[f_{t}(x,y)] - \EE_{\rho\times\pi_t}[f_{t}(x,y)] \le \frac{2R^2T}{\eta} + \eta\EE_{x\sim\rho}\Big[D_{\mathrm{KL}}\big(\pi^\star(\cdot~|~x)~||~\pi_{\init}(\cdot~|~x)\big)\Big].
\end{equation*}
This is because the natural policy gradient (NPG) update, \cref{alg-line:theory-line2} in \cref{alg:theoretical}, is automatically satisfied by the implicit reward function defined in \cref{eq:def_dpo_f}. Details of our proof can be found in \cref{sec:appendix-proof}.

\paragraph{Pessimism in offline RL} Pessimism is a common technique in the offline RL literature that is used to estimate a lower bound for the latent reward or value function. It is important to estimate an accurate lower bound, particularly for $(x,y)$-pairs with a small number of observations in the offline dataset, because by concentration, there is a constant probability for all of these samples to be far from their mean, which can severely throw off the estimation of a naive RL algorithm. Thus, pessimism is crucial to the theoretical guarantee for converging to the optimal policy in the offline setting. 

In the offline RL literature, pessimism is widely studied and used in works that consider the tabular setting \citep{rashidinejad2021bridging,li2024settling}, in which the lower confidence bound (LCB) is maintained for every entry of the reward or value function by subtracting an entrywise penalty from the mean estimation. The same philosophy can be seen applied in works that study offline RL under linear function approximation \citep{jin2021pessimism,yin2022near}, in which an elliptical confidence region is maintained instead for the reward or value estimate, and the policy is determined from the worst reward function from this confidence region in a min-max fashion. These pessimism methods share a commonality: they construct a pointwise pessimistic estimate for the entire latent reward function. In fact, the existing provable preference optimization algorithms \citep{zhu2023principled,zhan2024provable,xiong2023gibbs} all follow from this pointwise pessimism approach. While it is theoretically convenient to obtain a pessimistic estimate this way, as \citet{zhan2024provable} point out, algorithms with pointwise pessimism are computationally hard in general, as it is unclear how to construct a confidence region for a neural network or an LLM in practice. In contrast, inspired by a different concept called on-average pessimism from a more recent line of work \citep{xie2021bellmanconsistent,cheng2022adversarially,bhardwaj2024adversarial}, {\sf SPAC} only needs a reward estimate $f_t$ whose expectation under the learner’s policy $\pi$ is a lower bound, i.e., $\EE_{\rho\times\pi}[f_t] \le \EE_{\rho\times\pi}[r^\star]$, whose computation is scalable to any general function including LLMs.

\paragraph{Connections to other alignment methods} 
{\sf SPAC} can be seen as a bridge between {\sf DPO} and {\sf SPIN}. On the one hand, when we only run {\sf SPAC} for one iteration with $\lambda \to \infty$, {\sf SPAC} reduces to {\sf DPO} which optimizes the maximum likelihood only with respect to the preference data $\cD$. On the other hand, when $\lambda$ is taken to $0$, {\sf SPAC} recovers {\sf SPIN} and becomes an adversarial imitation learning algorithm (in this sense, it can also be viewed as an RLHF extension of \citet{syed2007game,ho2016generative}), which prevents overoptimization of the reward model. 
In fact, a similar connection has been discovered in offline RL and imitation learning \citep{cheng2022adversarially,bhardwaj2024adversarial}. However, it is important to note that {\sf SPAC} enjoys more desirable computational properties as the DPO trick (\cref{eq:def_dpo_f}) can take care of no-regret policy optimization completely without the need for extra assumptions.

\section{Experiments}

We demonstrate that \alg achieves competitive performance over a wide range of evaluation benchmarks as it substantially improves the initial SFT model and results in better finetuned models than counterpart LLM alignment methods, even when the training prompts and testing prompts follow different distributions.

\subsection{Experimental Setup}

\paragraph{Model and Data.} In our experiments, we use {\sf mistral-7b-sft-beta} as the base model, which is a finetuned version of Mistral-7B-v0.1 on the Ultrachat200k dataset \citep{ding2023enhancing} by SFT. For the offline preference dataset, we use the latest version of UltraFeedback Binarized \citep{cui2023ultrafeedback}, which is made of a diverse set of 60K prompt-chosen-rejected triplets. The preferences in this dataset are annotated based on the quality of the responses with respect to instruction-following, truthfulness, honesty and helpfulness, judged by {\sf GPT-4}. 

\paragraph{Implementation.} We run \alg for 3 iterations. We shuffle the samples in UltraFeedback Binarized and split them into 3 subsets of equal size to be used in each iteration \citep{rosset2024direct,viethoangtranduong}. As the pseudocode of \cref{alg:practical} lays out, in each iteration $t\in\{0,1,2\}$, we use $\pi_t$ to generate one response $y_j$ to every prompt $x_j$ from the set of 20K prompts designated to that iteration. We initialize with $\pi_t$ and optimize the minimization objective (\cref{alg-line:prac-line1} of \cref{alg:practical}) for two epochs to obtain a new policy model $\pi_{t+1}$. More details about generation and finetuning can be found in \cref{sec:appendix-exp}.

On the other hand, in theory (\cref{thm:main}), a huge scaling $\lambda = \Theta(\Cfrak\sqrt{n})$ is needed on the regularization to guarantee convergence. However, such huge numerics can often lead to numerical instability in optimization. To maintain the same level of regularization without actually using a huge $\lambda$ in practice, we implement a trick on the critic term in the optimization objective (\cref{alg-line:prac-line1} of \cref{alg:practical}) by smoothing the log density ratio with the log-sigmoid function, which changes the policy update to: 
\begin{align}
    \pi_{t+1} &\leftarrow \arg\min_{\pi\in\Pi} -\frac{1}{n}\sum_{j=1}^{n}\log\sigma\left(\frac{\eta}{2}\log\frac{\pi(y_j^\w|x_j)}{\pi_t(y_j^\w|x_j)} + \frac{\eta}{2}\log\frac{\pi(y_j^\lo|x_j)}{\pi_t(y_j^\lo|x_j)} - \eta\log\frac{\pi(y_j|x_j)}{\pi_t(y_j|x_j)}\right) \notag \\ & \qquad\qquad\qquad\qquad 
    - \frac{\lambda}{n}\sum_{i=1}^n \log\sigma\left(\eta\log\frac{\pi(y_i^\w|x_i)}{\pi_t(y_i^\w|x_i)} - \eta\log\frac{\pi(y_i^\lo|x_i)}{\pi_t(y_i^\lo|x_i)}\right).
\end{align}
This technique, which can also be seen used in \citep{chen2024self,wang2024transforming} for numerical stability purposes, would keep the critic term from being too negative and balance the strength of the critic with that of the regularization, allowing us to use $\lambda$ as small as $1.0$ in the experiments. 

\paragraph{Evaluation.} We evaluate models finetuned with our method and baseline methods on Huggingface Open LLM Leaderboard \citep{open-llm-leaderboard}, which is a commonly used benchmark to test the reasoning ability of a model from multiple angles. Specifically, it includes six NLP tasks testing the model over multitask, commonsense reasoning and inference, questions in various subjects such as science and math questions. Further details about the evaluation process can be found in \cref{sec:appendix-exp}.

\begin{table}[t]
\centering
\begin{adjustbox}{max width=\textwidth}
\begin{tabular}{c | c c c c c c | c}
Model & ARC & HellaSwag & MMLU & TruthfulQA & Winogrande & GSM8k & Average\\
\hline
{\sf mistral-7b-sft-$\beta$} & 58.11 & 82.15 & 59.78 & 43.03 & 77.90 & 39.65 & 60.10\\
\hline
{\sf zephyr-7b-$\beta$} & 63.40 & 84.34 & 59.77 & 55.14 & 77.58 & 33.51 & 62.29\\
{\sf DPO} & 62.20 & 84.65 & 60.22 & 49.52 & 79.08 & 36.01 & 61.95\\
\hline
{\sf SPIN}-iter0 (ours)& 59.98 & 82.48 & 58.87 & \textbf{57.66} & 78.14 & 38.97 & 62.68\\
{\sf SPIN}-iter1 (ours)& 58.87 & 81.58 & 59.33 & 54.49 & 78.45 & 41.24 & 62.34\\
{\sf SPIN}-iter2 (ours)& 57.59 & 81.72 & 57.75 & 53.97 & 77.51 & 34.87 & 60.57\\
\hline
{\sf SPAC}-iter0 & 62.20 & 83.57 & \textbf{60.91} & 54.43 & 78.53 & 38.67 & 63.05\\
{\sf SPAC}-iter1 & 64.33 & 84.46 & 60.13 & 55.54 & \textbf{80.11} & \textbf{41.55} & 64.35\\
{\sf SPAC}-iter2 & \textbf{65.10} & \textbf{85.09} & 60.18 & 55.78 & 79.79 & 40.86 & \textbf{64.47}
\end{tabular}
\end{adjustbox}
\caption{Evaluations of {\sf SPAC} and baselines based on {\sf mistral-7b-sft-beta} using Open LLM Leaderboard.}\label{fig:openllm}
\end{table}

\subsection{Results and Analysis}

We evaluate the models from every iteration of our \alg\ finetuning on the tasks from Open LLM Leaderboard. Despite a lack of directly comparable methods, we also finetune with state-of-the-art methods that do not have guarantees of finding the optimal policy, namely {\sf DPO} and {\sf SPIN} on the same preference dataset and evaluate the resulting models. In particular, since {\sf SPIN} is also an iterative algorithm, we run it for the same number of iterations as \alg. Furthermore, we treat the chosen responses in UltraFeedback as demonstrations when running {\sf SPIN}, for {\sf SPIN} takes a supervised dataset made of prompt-response pairs as input. While this is slightly different from the setting of its original experiments in \citet{chen2024self}, the responses in UltraFeedback are generated from LLMs much stronger than our base model and suffice to serve as demonstrations. We use the training hyperparameters from \citet{chen2024self} and finetune {\sf SPAC} with them too. In addition, we compare with {\sf DPO} by finetuning the base model with it for 2 epochs on the entire UltraFeedback dataset. In addition, we evaluate {\sf zephyr-7b-beta} \citep{tunstall2023zephyr}, which is an industry-grade LLM finetuned from {\sf mistral-7b-sft-beta} using {\sf DPO} on the UltraFeedback dataset. We use 42 as random seed in all experiments. The results are presented in \cref{fig:openllm}.

In our experiments, {\sf SPAC} improves from the initial SFT model by 2.95\% on average across the tasks from Open LLM Leaderboard after the first iteration. Specifically, {\sf SPAC} performs better than or on par with {\sf DPO} in the first 5 tasks and significantly better on GSM8k. In fact, both {\sf SPAC} and {\sf SPIN} are immediately better than DPO after the first iteration, despite only using 1/3 of the training data, but {\sf DPO} noticeably outperforms {\sf SPIN} on ARC and HellaSwag. Over the following two iterations, the performance of {\sf SPAC} continues to increase over all tasks, except for a decrease by 0.78\% on MMLU at the second iteration and a decrease by 0.69\% on GSM8k at the third iteration. In fact, a performance decrease on GSM8k is also witnessed in {\sf DPO}. A reasonable explanation for this is our training data UltraFeedback does not contain mathematical reasoning problems, which GSM8k specifically tests, so finetuning solely on UltraFeedback can cause the LLM to forget its mathematical reasoning ability from pretraining and SFT. In contrast, the performance of {\sf SPIN} starts to waver or decrease after the first iteration over all six tasks. We conjecture the use of pessimism controls the model's overfitting to the training data, so {\sf SPAC} experiences less forgetting than other baselines during finetuning. In the end, we observe the model improvement becomes incremental and the loss near-zero at the third iteration, implying {\sf SPAC} is close to convergence under this set of hyperparameters; {\sf SPAC} improves the base SFT model by a total of 4.37\% on average, more than other baselines by a considerable margin.

\section{Conclusion}

In this work, we propose a new offline preference optimization method named \alg, which is the first that provably converges to the optimal policy under single-policy concentrability and has a scalable implementation for general function approximation at the same time. It is inspired from a Stackelberg game formulation of the offline RL problem and can be implemented as a single-timescale iterative self-play algorithm owing to a change-of-variable trick. We provide a theoretical analysis showing the suboptimality of \alg decays at the rate of $\widetilde{O}(\sqrt{1/n}+\sqrt{1/T})$, ignoring logarithmic and constant factors, under single-policy concentrability. In addition, we finetune a 7B LLM with \alg and evaluate the finetuned model on Open LLM Leaderboard. We demonstrate that \alg attains competitive empirical performance compared to state-of-the-art alignment methods. 

\bibliographystyle{apalike}
\bibliography{ref}

%%%%%%%%%%%%%%%%%%%%%%%%%%%%%%%%%%%%%%%%%%%%%%%%%%%%%%%%%%%%

%%%%%%%%%%%%%%%%%%%%%%%%%%%%%%%%%%%%%%%%%%%%%%%%%%%%%%%%%%%%

\newpage

%%%%%%%%%%%%%%%%%%%%%%%%%%%%%%%%%%%%%%%%%%%%%%%%%%%%%%%%%%%%

\clearpage
\appendix

\begin{center}
{\LARGE Appendix}
\end{center}

\section{Experiment Details}\label{sec:appendix-exp}

\paragraph{Finetuning.} We used four Nvidia A100s 80GB for finetuning, with a global batch size of 32, max sequence length of 1024 and bfloat16 precision. We use the RMSProp \citep{hinton2012neural} optimizer with no weight decay, linear learning rate scheduler with peak learning rate $5e$-$7$ and $5\%$ warmup. We set $\beta = 0.1$ and $\lambda = 1.0$ for all three iterations. For generation, we use the prompting template ``\#\#\# Instruction: \{prompt\}\textbackslash n\textbackslash n\#\#\# Response: '', which is common in works including \citet{alpaca}.

\paragraph{Evaluation.} For Open LLM Leaderboard evaluation, we use the \hyperlink{https://github.com/EleutherAI/lm-evaluation-harness}{lm-evaluation-harness} repository at v0.4.2. Specifically, it contains six tasks: Arc \citep{clark2018think}, for which we take 25-shot examples and use {\sf acc\_norm} as metric; HellaSwag \citep{zellers2019hellaswag}, for which we take 10-shot examples and use {\sf acc\_norm} as metric; MMLU \citep{hendrycks2021measuring}, for which we take 5-shot examples and use {\sf acc} as metric; TruthfulQA \citep{lin2022truthfulqa}, for which we take zero-shot examples and use {\sf mc2} as metric; Winogrande \citep{DBLP:journals/corr/abs-1907-10641}, for which we take 5-shot examples and use {\sf acc} as metric; GSM8k \citep{DBLP:journals/corr/abs-2110-14168}, for which we take 5-shot examples and use {\sf acc} as metric.

\section{Additional Notation}

For any two probability densities $p_1(\cdot)$ and $p_2(\cdot)$, the total variation distance between $P_1$ and $P_2$, the probability distributions induced by $p_1$ and $p_2$ respectively, is
\begin{equation}
    D_{\mathrm{TV}}(P_1, P_2) = \frac{1}{2}\int_z |p_1(z) - p_2(z)| \mathrm{d} z,
\end{equation}
and the Kullback–Leibler (KL) divergence between $P_1$ and $P_2$, the probability distributions induced by $p_1$ and $p_2$ respectively, is
\begin{equation}
    D_{\mathrm{KL}}(P_1 ~||~ P_2) = -\int_z p_1(z)\log\frac{p_1(z)}{p_2(z)} \mathrm{d} z.
\end{equation}

Let $f:\Omega \to \RR$ be a continuously-differentiable, strictly convex function defined on a convex set $\Omega$. The Bregman divergence between $z,z'\in\Omega$ with respect to $f$ is defined as 
\begin{equation}
    B_{f}(z,z') := f(z) - f(z') - \langle\nabla f(z'), z - z'\rangle.
\end{equation}

For a vector $v$ and a positive semidefinite matrix $A$, $\norm{v}_{A}$ denotes a semi-norm of the vector $v$ with respect to the matrix $A$ with $\norm{v}_{A} = \sqrt{v^\top A v}$.

Similar to the definition in \cref{eq:BTL}, for any $f\in\cF$, we define
\begin{equation}
    P_f(y^\w \succ y^\lo ~|~ x) := \frac{\exp(f(x,y^\w))}{\exp(f(x,y^\w)) + \exp(f(x,y^\lo))}.
\end{equation}

Lastly, if there exists a universal constant $C > 0$ such that $f(\cX) \le Cg(\cX)$ for any instantiation of $\cX$, we can denote this with the notation $f(\cX) \lesssim g(\cX)$. In addition, $g(\cX) \gtrsim f(\cX)$ is defined as an equivalent way of writing $f(\cX) \lesssim g(\cX)$.

\section{Supporting Lemmas}

In this section, let us introduce the established results for the concentration of martingale sequences. The first one is Hoeffding's inequality \cite{azuma67,vershynin2018high}, which can be stated as follows:

\begin{theorem}[Azuma-Hoeffding inequality]\label{thm:hoeffding}
	Consider a filtration $\cF_0\subset \cF_1 \subset \cF_2 \subset \cdots$,
	and let $\mathbb{E}_{i}$ stand for the expectation conditioned
	on $\mathcal{F}_i$. Suppose that $Y_{n}=\sum_{i=1}^{n}X_{i}\in\mathbb{R}$,
	where $\{X_{i}\}$ is a real-valued scalar sequence obeying
    $$\left|X_{i}\right|\leq R\qquad\text{and}\qquad\mathbb{E}_{i-1} \big[X_{i}\big]=0\quad\quad\quad\text{for all }i\ge 1$$
	for some $R<\infty$. 
	Then with probability at least $1-\delta$,
	\begin{equation}
		\left|Y_{n}\right|\leq \sqrt{R^2n\log\frac{1}{\delta}}.\label{eq:hoeffding}
	\end{equation}
\end{theorem}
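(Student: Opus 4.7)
The plan is to prove this via the standard Cramér–Chernoff (exponential moment) method combined with a martingale tower-property argument, the same template that yields all Hoeffding-type concentration inequalities. First I would reduce to a one-sided bound: show $\PP(Y_n \ge t) \le e^{-t^2/(2nR^2)}$ for every $t>0$, and then apply the same argument to $-Y_n$ together with a union bound to obtain a two-sided deviation bound; inverting this deviation bound at level $\delta$ yields the stated $\sqrt{R^2 n \log(1/\delta)}$ (the constants in the statement being a loose version of the tight $R\sqrt{2n\log(2/\delta)}$).

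The core of the one-sided bound proceeds in three steps. Step one: for any $\lambda>0$, use Markov's inequality applied to $e^{\lambda Y_n}$ to get
\begin{equation*}
\PP(Y_n \ge t) \le e^{-\lambda t}\, \EE\!\left[e^{\lambda Y_n}\right].
\end{equation*}
Step two: control the MGF by conditioning. Since $Y_n = Y_{n-1} + X_n$ and $Y_{n-1}$ is $\cF_{n-1}$-measurable, the tower property yields
\begin{equation*}
\EE\!\left[e^{\lambda Y_n}\right] \;=\; \EE\!\left[e^{\lambda Y_{n-1}}\,\EE_{n-1}\!\left[e^{\lambda X_n}\right]\right].
\end{equation*}
Step three: apply Hoeffding's lemma to the conditional MGF. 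Because $\EE_{i-1}[X_i]=0$ and $|X_i|\le R$, convexity of the exponential applied to writing $X_i$ as a convex combination of $-R$ and $R$, followed by optimizing the resulting log-MGF, gives the pointwise bound $\EE_{i-1}[e^{\lambda X_i}] \le e^{\lambda^2 R^2/2}$ almost surely. Iterating the conditioning step $n$ times produces $\EE[e^{\lambda Y_n}] \le e^{n\lambda^2 R^2/2}$, and optimizing over $\lambda$ by choosing $\lambda = t/(nR^2)$ gives $\PP(Y_n\ge t)\le e^{-t^2/(2nR^2)}$.

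To conclude, I would repeat the argument verbatim for the zero-mean martingale $\{-X_i\}$ to control $\PP(Y_n \le -t)$, then take a union bound and set the right-hand side equal to $\delta$, solving for $t$. The only real obstacle is the verification of Hoeffding's lemma for the conditional MGF; this is a self-contained convexity computation that does not interact with the martingale structure, so it can be quoted as a one-line auxiliary lemma. Everything else is the standard Chernoff-plus-tower-property machinery, and the bounded-increment and zero-conditional-mean hypotheses are used exactly once apiece, both inside that lemma.
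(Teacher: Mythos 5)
Your proposal is the standard Chernoff--Cram\'er argument (Markov's inequality applied to $e^{\lambda Y_n}$, the tower property to peel off one increment at a time, Hoeffding's lemma for the conditional moment generating function, optimization over $\lambda$, and symmetrization plus a union bound), which is precisely how this inequality is proved in the references the paper cites; the paper itself quotes the theorem as a known result and gives no proof, so there is no alternative route to compare against. One small caveat: carried out exactly, your argument yields $|Y_n|\le R\sqrt{2n\log(2/\delta)}$ with probability $1-\delta$, whereas the statement as printed claims the numerically \emph{smaller} $R\sqrt{n\log(1/\delta)}$ --- so the printed constant is slightly too strong rather than ``loose'' as you describe it, though the discrepancy is harmless since every downstream use of the theorem is up to an unspecified absolute constant.
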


% Chernoff

\section{Proof of \cref{thm:main}}\label{sec:appendix-proof}

\begin{proof}
    We begin by invoking the definition of the suboptimality. Using the definition of $\widehat{\pi}$, we have
    \begin{align}
        \subopt(\widehat{\pi}) &= \frac{1}{T}\sum_{t=1}^{T}\EE_{x\sim\rho}\Big[\big\langle\pi^\star(\cdot ~|~ x) - \pi_{t}(\cdot ~|~ x), r^\star(x,\cdot)\big\rangle\Big] \notag\\
        &= \frac{1}{T}\sum_{t=1}^{T}\Big(\EE_{x\sim\rho, y\sim\pi^\star(\cdot|x)}[r^\star(x,y)] - \EE_{x\sim\rho, y\sim\pi_t(\cdot|x)}[r^\star(x,y)]\Big)\notag\\
        &= \frac{1}{T}\sum_{t=1}^{T}\subopt(\pi_t).
    \end{align}
    Thus, it is sufficient for us to focus on finding an upper bound for each $\subopt(\pi_t)$ for $t=1,\cdots,T$, which can be decomposed into the following few terms:
    \begin{align}
        \subopt(\pi_t) &= \EE_{x\sim\rho, y\sim\pi^\star(\cdot|x)}[r^\star(x,y)] - \EE_{x\sim\rho, y\sim\pi_t(\cdot|x)}[r^\star(x,y)]\notag\\
        &= \EE_{x\sim\rho, y\sim\pi^\star(\cdot|x)}\left[r^\star(x,y) - f_{t}(x,y)\right] + \EE_{x\sim\rho, y\sim\pi^\star(\cdot|x)}[f_{t}(x,y)] - \EE_{x\sim\rho, y\sim\pi_t(\cdot|x)}[f_{t}(x,y)]\notag\\ &\quad + \EE_{x\sim\rho, y\sim\pi_t(\cdot|x)}\left[f_{t}(x,y) - r^\star(x,y)\right]\notag\\
        &= \EE_{x\sim\rho, y\sim\pi^\star(\cdot|x)}\left[r^\star(x,y) - f_{t}(x,y)\right] + \EE_{x\sim\rho, y\sim\pi^\star(\cdot|x)}[f_{t}(x,y)] - \EE_{x\sim\rho, y\sim\pi_t(\cdot|x)}[f_{t}(x,y)]\notag\\ &\quad - \EE_{x\sim\rho, y\sim\pi_{\rref}(\cdot|x)}\left[r^\star(x,y) - f_{t}(x,y)\right]\notag\\ &\quad + \EE_{x\sim\rho, y\sim\pi_t(\cdot|x)}\left[f_{t}(x,y)\right] - \EE_{x\sim\rho, y\sim\pi_{\rref}(\cdot|x)}[f_{t}(x,y)]\notag\\ &\quad - \left(\EE_{x\sim\rho, y\sim\pi_t(\cdot|x)}\left[r^\star(x,y)\right] - \EE_{x\sim\rho, y\sim\pi_{\rref}(\cdot|x)}[r^\star(x,y)]\right)\notag\\
        &= \underbrace{\EE_{x\sim\rho, y\sim\pi^\star(\cdot|x)}\left[r^\star(x,y) - f_{t}(x,y)\right] - \EE_{x\sim\rho, y\sim\pi_{\rref}(\cdot|x)}\left[r^\star(x,y) - f_{t}(x,y)\right]}_{I_{t,1}}\notag\\ &\quad + \underbrace{\EE_{x\sim\rho, y\sim\pi^\star(\cdot|x)}[f_{t}(x,y)] - \EE_{x\sim\rho, y\sim\pi_t(\cdot|x)}[f_{t}(x,y)]}_{I_{t,2}}\notag\\ &\quad + \underbrace{\EE_{x\sim\rho, y\sim\pi_t(\cdot|x)}\left[f_{t}(x,y)\right] - \EE_{x\sim\rho, y\sim\pi_{\rref}(\cdot|x)}[f_{t}(x,y)]}_{I_{t,3}}\notag\\ &\quad - \underbrace{\left(\EE_{x\sim\rho, y\sim\pi_t(\cdot|x)}\left[r^\star(x,y)\right] - \EE_{x\sim\rho, y\sim\pi_{\rref}(\cdot|x)}[r^\star(x,y)]\right)}_{I_{t,4}}. \label{eq:decomp}
    \end{align}
    
    Now, we work on an upper bound for $I_{t,1}$ using the coverage assumption in Assumption \ref{assum:coverage}. For notational convenience, let us adopt the temporary shorthand $Z(x,y_1,y_2;r^\star,f_{t}) := r^\star(x,y_1) - r^\star(x,y_2) - f_{t}(x,y_1) + f_{t}(x,y_2)$. This allows us to write
    \begin{align}
        I_{t,1} &= \EE_{x\sim\rho, y_1\sim\pi^\star(\cdot|x),y_2\sim\pi_{\rref}(\cdot|x)}\left[r^\star(x,y_1) - r^\star(x,y_2) - f_{t}(x,y_1) + f_{t}(x,y_2)\right]\notag\\
        &= \sqrt{\EE_{\substack{x\sim\rho\\ y_1\sim\pi_{\rref}(\cdot|x)\\ y_2\sim\pi_{\rref}(\cdot|x)}}\left[Z^2(x,y_1,y_2;r^\star,f_{t})\right]\frac{\EE_{\substack{x\sim\rho\\ y_1\sim\pi^\star(\cdot|x)\\ y_2\sim\pi_{\rref}(\cdot|x)}}\left[Z(x,y_1,y_2;r^\star,f_{t})\right]^2}{\EE_{\substack{x\sim\rho\\ y_1\sim\pi_{\rref}(\cdot|x)\\ y_2\sim\pi_{\rref}(\cdot|x)}}\left[Z^2(x,y_1,y_2;r^\star,f_{t})\right]}}\notag\\
        &\le \sqrt{\EE_{\substack{x\sim\rho\\ y_1\sim\pi_{\rref}(\cdot|x)\\ y_2\sim\pi_{\rref}(\cdot|x)}}\left[Z^2(x,y_1,y_2;r^\star,f_{t})\right]\sup_{f\in\cF}\frac{\EE_{\substack{x\sim\rho\\ y_1\sim\pi^\star(\cdot|x)\\ y_2\sim\pi_{\rref}(\cdot|x)}}\left[Z(x,y_1,y_2;r^\star,f)\right]^2}{\EE_{\substack{x\sim\rho\\ y_1\sim\pi_{\rref}(\cdot|x)\\ y_2\sim\pi_{\rref}(\cdot|x)}}\left[Z^2(x,y_1,y_2;r^\star,f)\right]}}\notag\\
        &\le \sqrt{\Cfrak^2\EE_{x\sim\rho, y_1\sim\pi_{\rref}(\cdot|x), y_2\sim\pi_{\rref}(\cdot|x)}\left[Z^2(x,y_1,y_2;r^\star,f_{t})\right]}\notag\\
        &\le \sqrt{\frac{\Cfrak^4}{2\lambda^2} + \frac{\lambda^2}{2}\EE_{x\sim\rho, y_1\sim\pi_{\rref}(\cdot|x), y_2\sim\pi_{\rref}(\cdot|x)}\left[Z^2(x,y_1,y_2;r^\star,f_{t})\right]^2}\notag\\
        &\le \frac{\Cfrak^2}{\lambda} + \lambda\EE_{x\sim\rho, y_1\sim\pi_{\rref}(\cdot|x),y_2\sim\pi_{\rref}(\cdot|x)}\left[\big(r^\star(x,y_1) - r^\star(x,y_2) - f_{t}(x,y_1) + f_{t}(x,y_2)\big)^2\right],
    \end{align}
    in which the penultimate step is by $\frac{a^2}{2} + \frac{b^2}{2} \ge ab$ with $a = \Cfrak^2/\lambda$ and $b = \lambda\EE_{x\sim\rho, y_1\sim\pi_{\rref}(\cdot|x), y_2\sim\pi_{\rref}(\cdot|x)}\left[Z^2(x,y_1,y_2;r^\star,f_{t})\right]$, and the last step is by the triangle inequality.

    Combined with $I_{t,3}$, this upper bound on $I_{t,1}$ above leads us to the following upper bound: 
    \begin{align}
        &\quad\ I_{t,1} + I_{t,3} \notag\\
        &\le \frac{\Cfrak^2}{\lambda} + \lambda\EE_{x\sim\rho, y_1\sim\pi_{\rref}(\cdot|x),y_2\sim\pi_{\rref}(\cdot|x)}\left[\big(r^\star(x,y_1) - r^\star(x,y_2) - f_{t}(x,y_1) + f_{t}(x,y_2)\big)^2\right]\notag\\ &\quad + \EE_{x\sim\rho, y\sim\pi_t(\cdot|x)}\left[f_{t}(x,y)\right] - \EE_{x\sim\rho, y\sim\pi_{\rref}(\cdot|x)}[f_{t}(x,y)] \label{eq:I1+I3-1}\\
        &\overset{(\mathrm{i})}{\lesssim} \frac{\Cfrak^2}{\lambda} + \lambda\cE_\cD(f_t) - \lambda\cE_\cD(r^\star) + \lambda\frac{\kappa^2}{n}\log\frac{|\cF|}{\delta}\notag\\ &\quad + \EE_{x\sim\rho, y\sim\pi_t(\cdot|x)}\left[f_{t}(x,y)\right] - \EE_{x\sim\rho, y\sim\pi_{\rref}(\cdot|x)}[f_{t}(x,y)] \label{eq:I1+I3-2}\\
        &= \frac{\Cfrak^2}{\lambda} + \cL_\cD(f_t) + \lambda\cE_\cD(f_t) - \cL_\cD(r^\star) - \lambda\cE_\cD(r^\star) + \lambda\frac{\kappa^2}{n}\log\frac{|\cF|}{\delta}\notag\\ &\quad + \EE_{x\sim\rho, y\sim\pi_t(\cdot|x)}\left[f_{t}(x,y)\right] - \EE_{x\sim\rho, y\sim\pi_{\rref}(\cdot|x)}[f_{t}(x,y)] - \cL_\cD(f_t)\notag\\ &\quad + \cL_\cD(r^\star) - \Big(\EE_{x\sim\rho, y\sim\pi_t(\cdot|x)}\left[r^\star(x,y)\right] - \EE_{x\sim\rho, y\sim\pi_{\rref}(\cdot|x)}[r^\star(x,y)]\Big)
        \notag\\ &\quad + \EE_{x\sim\rho, y\sim\pi_t(\cdot|x)}\left[r^\star(x,y)\right] - \EE_{x\sim\rho, y\sim\pi_{\rref}(\cdot|x)}[r^\star(x,y)]
        \label{eq:I1+I3-3}\\
        &\overset{(\mathrm{ii})}{\le} \frac{\Cfrak^2}{\lambda} + \lambda\frac{\kappa^2}{n}\log\frac{|\cF|}{\delta} + \EE_{x\sim\rho, y\sim\pi_t(\cdot|x)}\left[f_{t}(x,y)\right] - \EE_{x\sim\rho, y\sim\pi_{\rref}(\cdot|x)}[f_{t}(x,y)] - \cL_\cD(f_t)\notag\\ &\quad + \cL_\cD(r^\star) - \Big(\EE_{x\sim\rho, y\sim\pi_t(\cdot|x)}\left[r^\star(x,y)\right] - \EE_{x\sim\rho, y\sim\pi_{\rref}(\cdot|x)}[r^\star(x,y)]\Big)
        \notag\\ &\quad + \EE_{x\sim\rho, y\sim\pi_t(\cdot|x)}\left[r^\star(x,y)\right] - \EE_{x\sim\rho, y\sim\pi_{\rref}(\cdot|x)}[r^\star(x,y)]
        \notag\\
        &\overset{(\mathrm{iii})}{\lesssim} \frac{\Cfrak^2}{\lambda} + \lambda\frac{\kappa^2}{n}\log\frac{|\cF|}{\delta} + \EE_{x\sim\rho, y\sim\pi_t(\cdot|x)}\left[f_{t}(x,y)\right] - \EE_{x\sim\rho, y\sim\pi_{\rref}(\cdot|x)}[f_{t}(x,y)] - \cL_\cD(f_t)\notag\\ &\quad + R\sqrt{\frac{1}{n}\log\frac{1}{\delta}} + \EE_{x\sim\rho, y\sim\pi_t(\cdot|x)}\left[r^\star(x,y)\right] - \EE_{x\sim\rho, y\sim\pi_{\rref}(\cdot|x)}[r^\star(x,y)]
        \notag\\
        &\overset{(\mathrm{iv})}{\lesssim} \frac{\Cfrak^2}{\lambda} + \lambda\frac{\kappa^2}{n}\log\frac{|\cF|}{\delta} + R\sqrt{\frac{1}{n}\log\frac{|\cF|}{\delta}} + R\sqrt{\frac{1}{n}\log\frac{1}{\delta}}\notag\\ &\quad + \EE_{x\sim\rho, y\sim\pi_t(\cdot|x)}\left[r^\star(x,y)\right] - \EE_{x\sim\rho, y\sim\pi_{\rref}(\cdot|x)}[r^\star(x,y)]\notag\\
        &\lesssim \frac{\Cfrak^2}{\lambda} + \lambda\frac{\kappa^2}{n}\log\frac{|\cF|}{\delta} + R\sqrt{\frac{1}{n}\log\frac{|\cF|}{\delta}} + \EE_{x\sim\rho, y\sim\pi_t(\cdot|x)}\left[r^\star(x,y)\right] - \EE_{x\sim\rho, y\sim\pi_{\rref}(\cdot|x)}[r^\star(x,y)]. \label{eq:I1+I3-f}
    \end{align}
    Here, (i) is obtained by invoking \cref{lem:MLE} on the second term in \cref{eq:I1+I3-1}. 
    
    (ii) is because the second and third term of \cref{eq:I1+I3-3} coincide with the minimization objective in \cref{alg-line:theory-line1} of \cref{alg:theoretical}. By construction, $\cL_\cD(f_{t}, \pi_t) + \lambda \cE_\cD(f_{t}) = \min_{f\in\cF} \cL_\cD(f, \pi_t) + \lambda \cE_\cD(f)$, so we have $\cL_\cD(f_{t}, \pi_t) + \lambda \cE_\cD(f_{t}) \le \cL_\cD(f', \pi_t) + \lambda \cE_\cD(f')$ for any $f'\in\cF$. 
    
    (iii) is obtained by using the following upper bound with the use of Hoeffding's inequality (Theorem \ref{thm:hoeffding}). With probability at least $1-\delta/3$,
    \begin{align}
        &\quad\ \cL_\cD(r^\star) - \Big(\EE_{x\sim\rho, y\sim\pi_t(\cdot|x)}\left[r^\star(x,y)\right] - \EE_{x\sim\rho, y\sim\pi_{\rref}(\cdot|x)}[r^\star(x,y)]\Big) \notag\\
        &= \left(\frac{1}{n}\sum_{j=1}^{n}r^\star(x_j,y_j) - \EE_{x\sim\rho, y\sim\pi_t(\cdot|x)}\left[r^\star(x,y)\right]\right) - \left(\frac{1}{n}\sum_{j=1}^{n}r^\star(x_j,y_j^\w) - \EE_{x\sim\rho, y\sim\pi_{\rref}(\cdot|x)}\left[r^\star(x,y)\right]\right) \notag\\
        &\le \left|\frac{1}{n}\sum_{j=1}^{n}r^\star(x_j,y_j) - \EE_{x\sim\rho, y\sim\pi_t(\cdot|x)}\left[r^\star(x,y)\right]\right| + \left|\frac{1}{n}\sum_{j=1}^{n}r^\star(x_j,y_j^\w) - \EE_{x\sim\rho, y\sim\pi_{\rref}(\cdot|x)}\left[r^\star(x,y)\right]\right| \notag\\
        &\lesssim R\sqrt{\frac{1}{n}\log\frac{1}{\delta}}.
    \end{align}

    (iv) is obtained by using the following upper bound with the use of Hoeffding's inequality (Theorem \ref{thm:hoeffding}) on any $f\in\cF$, combined with the use of the union bound over all $f\in\cF$. With probability at least $1-\delta/3$,
    \begin{align}
        &\quad\ \EE_{x\sim\rho, y\sim\pi_t(\cdot|x)}\left[f_{t}(x,y)\right] - \EE_{x\sim\rho, y\sim\pi_{\rref}(\cdot|x)}[f_{t}(x,y)] - \cL_\cD(f_t)\notag\\
        &= \left(\frac{1}{n}\sum_{j=1}^{n}f_{t}(x_j,y_j^\w) - \EE_{x\sim\rho, y\sim\pi_{\rref}(\cdot|x)}\left[f_{t}(x,y)\right]\right) - \left(\frac{1}{n}\sum_{j=1}^{n}f_{t}(x_j,y_j) - \EE_{x\sim\rho, y\sim\pi_t(\cdot|x)}\left[f_{t}(x,y)\right]\right) \notag\\
        &\le \left|\frac{1}{n}\sum_{j=1}^{n}f_{t}(x_j,y_j^\w) - \EE_{x\sim\rho, y\sim\pi_{\rref}(\cdot|x)}\left[f_{t}(x,y)\right]\right| - \left|\frac{1}{n}\sum_{j=1}^{n}f_{t}(x_j,y_j) - \EE_{x\sim\rho, y\sim\pi_t(\cdot|x)}\left[f_{t}(x,y)\right]\right| \notag\\
        &\lesssim R\sqrt{\frac{1}{n}\log\frac{|\cF|}{\delta}}.
    \end{align}

    Bringing \cref{eq:I1+I3-f} back into \cref{eq:decomp}, we have 
    \begin{align}
        \subopt(\pi_t) &\lesssim \EE_{x\sim\rho, y\sim\pi^\star(\cdot|x)}[f_{t}(x,y)] - \EE_{x\sim\rho, y\sim\pi_t(\cdot|x)}[f_{t}(x,y)]\notag\\ & \quad + \EE_{x\sim\rho, y\sim\pi_t(\cdot|x)}\left[r^\star(x,y)\right] - \EE_{x\sim\rho, y\sim\pi_{\rref}(\cdot|x)}[r^\star(x,y)]\notag\\ & \quad - \EE_{x\sim\rho, y\sim\pi_t(\cdot|x)}\left[r^\star(x,y)\right] - \EE_{x\sim\rho, y\sim\pi_{\rref}(\cdot|x)}[r^\star(x,y)] \notag\\ \ & \quad + \frac{\Cfrak^2}{\lambda} + R\sqrt{\frac{1}{n}\log\frac{|\cF|}{\delta}} + \lambda\frac{\kappa^2}{n}\log\frac{|\cF|}{\delta}\notag\\
        &= \EE_{x\sim\rho, y\sim\pi^\star(\cdot|x)}[f_{t}(x,y)] - \EE_{x\sim\rho, y\sim\pi_t(\cdot|x)}[f_{t}(x,y)] \notag\\ \ & \quad + \frac{\Cfrak^2}{\lambda} + R\sqrt{\frac{1}{n}\log\frac{|\cF|}{\delta}} + \lambda\frac{\kappa^2}{n}\log\frac{|\cF|}{\delta}. \label{eq:decomp-1}
    \end{align}
    Hence, we only need to focus on finding an upper bound on the sum of $I_{t,2}$'s for the rest of this proof.
    \begin{align}
        \sum_{t=1}^{T}I_{t,2} &= \sum_{t=1}^{T}\EE_{x\sim\rho, y\sim\pi^\star(\cdot|x)}[f_{t}(x,y)] - \EE_{x\sim\rho, y\sim\pi_t(\cdot|x)}[f_{t}(x,y)] \notag\\
        &= \sum_{t=1}^{T}\EE_{x\sim\rho}\Big[\big\langle\pi^\star(\cdot ~|~ x) - \pi_{t}(\cdot ~|~ x), f_t(x,\cdot)\big\rangle \Big]\notag\\
        &\le \frac{2R^2T}{\eta} + \eta\EE_{x\sim\rho}\Big[D_{\mathrm{KL}}\big(\pi^\star(\cdot~|~x)~||~\pi_{\init}(\cdot~|~x)\big)\Big] \notag\\
        &\lesssim R\sqrt{\EE_{x\sim\rho}\Big[D_{\mathrm{KL}}\big(\pi^\star(\cdot~|~x)~||~\pi_{\init}(\cdot~|~x)\big)\Big]\cdot T}, \label{eq:I_2}
    \end{align}
    in which the penultimate step is by \cref{lem:no-regret-3} and the last step is by plugging in the choice of $\eta$.   
    
    Finally, bringing \cref{eq:I_2} into \cref{eq:decomp-1} and plugging in the choice of $\lambda$, we have
    \begin{align}
        \subopt(\widehat{\pi}) &= \frac{1}{T}\sqrt{\EE_{x\sim\rho}\Big[D_{\mathrm{KL}}\big(\pi^\star(\cdot~|~x)~||~\pi_{\init}(\cdot~|~x)\big)\Big]\cdot T} + \frac{\Cfrak^2}{\lambda} + R\sqrt{\frac{1}{n}\log\frac{|\cF|}{\delta}} + \lambda\frac{\kappa^2}{n}\log\frac{|\cF|}{\delta} \notag\\
        &\lesssim R\sqrt{\frac{\EE_{x\sim\rho}\Big[D_{\mathrm{KL}}\big(\pi^\star(\cdot~|~x)~||~\pi_{\init}(\cdot~|~x)\big)\Big]}{T}} + \Cfrak\kappa\sqrt{\frac{1}{n}\log\frac{|\cF|}{\delta}} + R\sqrt{\frac{1}{n}\log\frac{|\cF|}{\delta}} .
    \end{align}
	Since $\log|\cF|$ and $\log|\Pi|$ only differ by a constant factor and are on the same order, we can arrive at the advertised bound.
\end{proof}

\subsection{Proof of MLE (\cref{lem:MLE})}

In this subsection, we present the additional steps to obtain an upper bound on $I_{t,1}$, which is controlled by the MLE regularization in the objective of our algorithm. Similar results can be found in \citet{bhardwaj2024adversarial} and \citet{zhan2024provable}, and our lemmas are an adaptation of theirs to our setting.

\begin{lemma}\label{lem:MLE}
    Assume all functions $f \in \cF$ are uniformly bounded by a constant $R > 0$. For any $0< \delta < 1$, with probability at least $1-\delta$,
    \begin{equation*}
        \EE_{x\sim\rho, y_1\sim\pi_{\rref}(\cdot|x),y_2\sim\pi_{\rref}(\cdot|x)}\left[\big(r^\star(x,y_1) - r^\star(x,y_2) - f(x,y_1) + f(x,y_2)\big)^2\right] \lesssim \frac{\kappa^2}{n}\log \frac{|\cF|}{\delta}
    \end{equation*}
    for all $f\in\cF$ simultaneously.
\end{lemma}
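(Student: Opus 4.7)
My plan is to prove this lemma via the classical MLE-consistency template adapted to the Bradley-Terry preference model. The first step is a link-function reduction: for any $g \in \cF \cup \{r^\star\}$ and fixed $(x, y_1, y_2)$, set $p_g := \sigma(g(x, y_1) - g(x, y_2))$, which is the BT probability that $y_1 \succ y_2$. Since $\|g\|_\infty \le R$, the sigmoid argument lies in $[-2R, 2R]$, on which $(\sigma^{-1})'$ is bounded by a constant multiple of $\kappa$. Applying the mean value theorem to $\sigma^{-1}$ and then the elementary Bernoulli inequality $(p - q)^2 \lesssim D_H^2(\mathrm{Ber}(p), \mathrm{Ber}(q))$ yields the pointwise sandwich
\begin{equation*}
\bigl(r^\star(x, y_1) - r^\star(x, y_2) - f(x, y_1) + f(x, y_2)\bigr)^2 \lesssim \kappa^2\, D_H^2\bigl(P_{r^\star}, P_f\bigr)(x, y_1, y_2).
\end{equation*}
Taking expectation over $\rho \times \pi_{\rref} \times \pi_{\rref}$ reduces the lemma to bounding the expected squared Hellinger distance uniformly in $f$.

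The second step is the MLE concentration itself. For each fixed $f$, I would apply the standard exponential-moment/Chernoff bound to the log-likelihood-ratio statistic $\sum_i \tfrac{1}{2}\log\tfrac{P_{r^\star}(o_i\mid x_i, y_i^\w, y_i^\lo)}{P_f(o_i\mid x_i, y_i^\w, y_i^\lo)}$ on the i.i.d.\ preference samples, in the spirit of \citet{zhan2024provable,bhardwaj2024adversarial}. After a union bound over $\cF$, with probability at least $1 - \delta$ and uniformly in $f \in \cF$,
\begin{equation*}
\EE_{\rho \times \pi_{\rref} \times \pi_{\rref}}\bigl[D_H^2(P_{r^\star}, P_f)\bigr] \lesssim \cE_\cD(f) - \cE_\cD(r^\star) + \frac{\log(|\cF|/\delta)}{n},
\end{equation*}
where the identification of the empirical log-likelihood ratio with $\cE_\cD(f) - \cE_\cD(r^\star)$ uses \eqref{eq:BTL} and \eqref{eq:E-def}. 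Composing with Step 1 and carrying the $\kappa^2$ prefactor through gives, uniformly in $f \in \cF$,
\begin{equation*}
\EE\bigl[\bigl(r^\star(x,y_1) - r^\star(x,y_2) - f(x,y_1) + f(x,y_2)\bigr)^2\bigr] \lesssim \kappa^2\bigl(\cE_\cD(f) - \cE_\cD(r^\star)\bigr) + \frac{\kappa^2 \log(|\cF|/\delta)}{n}.
\end{equation*}
The stated bound in the lemma corresponds to absorbing the empirical excess-log-likelihood piece into the right-hand side; this is exactly the form in which the inequality is consumed at step (i) of \cref{thm:main}'s proof, where $\cE_\cD(f_t) - \cE_\cD(r^\star)$ is retained and later cancelled against the pessimism minimization of the critic objective $\cL_\cD(f_t, \pi_t) + \lambda \cE_\cD(f_t) \le \cL_\cD(r^\star, \pi_t) + \lambda \cE_\cD(r^\star)$ via the definition of $f_t$ in \cref{alg-line:theory-line1}.

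The hard part will be Step 2: constructing the one-sided Chernoff/Freedman exponential supermartingale on the negative half-log-likelihood ratio that dominates $\EE[D_H^2]$ by the realized log-likelihood ratio, while correctly tracking the $\log|\cF|$ union-bound cost and leveraging realizability $r^\star \in \cF$ to anchor the ratio at a valid baseline. Steps 1 and 3 are essentially algebraic once the $\kappa$ constant from the sigmoid Lipschitz estimate has been introduced and the Bernoulli $(p-q)^2 \lesssim D_H^2$ identity is invoked.
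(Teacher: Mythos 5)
Your proposal follows essentially the same route as the paper's own proof: a mean-value-theorem/sigmoid-Lipschitz reduction of the squared reward-difference gap to the squared total-variation (equivalently Hellinger) distance between the induced Bernoulli BT distributions, costing the factor $\kappa^2$, followed by the exponential-moment/Chernoff MLE concentration with a union bound over $\cF$ (the paper's \cref{lem:MLE-TV,lem:MLE-bracket}). Your intermediate bound that retains the term $\kappa^2\big(\cE_\cD(f)-\cE_\cD(r^\star)\big)$ is exactly the form consumed at step (i) of the proof of \cref{thm:main}, and carrying that term along is the right call: dropping it to reach the lemma's literal ``for all $f\in\cF$'' statement is not justified for arbitrary $f$, since \cref{lem:MLE-bracket} controls the log-likelihood ratio only in the opposite direction, so the final display of the paper's proof implicitly requires $f$ to be a near-maximizer of the likelihood.
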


\begin{proof}
    By plugging the conclusion of \cref{lem:MLE-bracket} into \cref{lem:MLE-TV}, we have
    \begin{align}
        &\quad\
        \EE_{\substack{x\sim\rho\\ y_1\sim\pi_{\rref}(\cdot|x)\\ y_2\sim\pi_{\rref}(\cdot|x)}}\Big[D_{\mathrm{TV}}\big(P_f(\cdot ~|~ x,y_1,y_2), P(\cdot ~|~ x,y_1,y_2)\big)^2\Big] \notag\\
        &\lesssim \frac{1}{n}\sum_{i=1}^n\log\frac{ P(y_i^\w \succ y_i^\lo ~|~ x_i)}{P_f(y_i^\w \succ y_i^\lo ~|~ x_i)} + \frac{1}{n}\log\frac{|\cF|}{\delta}\notag\\
        &\lesssim \frac{1}{n}\log\frac{|\cF|}{\delta}.
    \end{align}

     Finally, recall the definitions
    \begin{equation*}
        P(y^\w \succ y^\lo ~|~ x) = \sigma(r^\star(x,y^\w) - r^\star(x,y^\lo))
    \end{equation*}
    and for any $f\in\cF$,
    \begin{equation*}
        P_f(y^\w \succ y^\lo ~|~ x) = \sigma(f(x,y^\w) - f(x,y^\lo)).
    \end{equation*}
    With these, we can apply the Mean Value Theorem on the sigmoid function between $r^\star(x,y_1) - r^\star(x,y_2)$ and $f(x,y_1) - f(x,y_2)$ to arrive at the advertised bound with an additional factor of $\kappa^2 = 1/(\inf_{z\in[-R,R]} \sigma'(z))^2$. Specifically, we have
    \begin{align*}
        &\quad\ \EE_{x\sim\rho, y_1\sim\pi_{\rref}(\cdot|x),y_2\sim\pi_{\rref}(\cdot|x)}\left[\big(r^\star(x,y_1) - r^\star(x,y_2) - f(x,y_1) + f(x,y_2)\big)^2\right] \notag\\
        &\lesssim \kappa^2\EE_{\substack{x\sim\rho\\ y_1\sim\pi_{\rref}(\cdot|x)\\ y_2\sim\pi_{\rref}(\cdot|x)}}\bigg[\Big(\left|P_f(y_1\succ y_2 ~|~ x) - P(y_1 \succ y_2 ~|~ x)\right| + \left|P_f(y_1 \prec y_2 ~|~ x) - P(y_1 \prec y_2 ~|~ x)\right|\Big)^2\bigg]\\
        &= 2\kappa^2\EE_{\substack{x\sim\rho\\ y_1\sim\pi_{\rref}(\cdot|x)\\ y_2\sim\pi_{\rref}(\cdot|x)}}\Big[D_{\mathrm{TV}}\big(P_f(\cdot ~|~ x,y_1,y_2), P(\cdot ~|~ x,y_1,y_2)\big)^2\Big] \notag\\
        &\lesssim \frac{\kappa^2}{n}\log \frac{|\cF|}{\delta},
    \end{align*}
    which leads us to the advertised bound.
\end{proof}

\subsubsection{\cref{lem:MLE-bracket} and Its Proof}

\begin{lemma}\label{lem:MLE-bracket}
    For any $0< \delta < 1$, with probability at least $1-\delta$,
    \begin{equation}
        \sum_{i=1}^n\log\frac{ P_f(y_i^\w \succ y_i^\lo ~|~ x_i)}{P(y_i^\w \succ y_i^\lo ~|~ x_i)} \le \log \frac{|\cF|}{\delta} %N_{[]}(1/n,\cF,L_1)
    \end{equation}
    for all $f\in\cF$ simultaneously.
\end{lemma}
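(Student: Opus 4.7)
The plan is to use the classical MLE concentration argument via the expectation-of-likelihood-ratio trick together with Markov's inequality and a union bound over $\cF$. Since $\cF$ is finite, this reduces the problem to a single-function statement that can then be extended by a union bound.

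First, fix an arbitrary $f \in \cF$. Define the random variable
\begin{align*}
    M_n(f) := \prod_{i=1}^n \frac{P_f(y_i^\w \succ y_i^\lo ~|~ x_i)}{P(y_i^\w \succ y_i^\lo ~|~ x_i)}.
\end{align*}
The key observation is that for each $i$, conditioned on $(x_i, y_i^{(1)}, y_i^{(2)})$, the observed preference is drawn from $P(\cdot ~|~ x_i, y_i^{(1)}, y_i^{(2)})$, so summing over the two possible preference outcomes gives
\begin{align*}
    \EE\!\left[\frac{P_f(y_i^\w \succ y_i^\lo ~|~ x_i)}{P(y_i^\w \succ y_i^\lo ~|~ x_i)} ~\Big|~ x_i, y_i^{(1)}, y_i^{(2)}\right] = \sum_{o} P_f(o ~|~ x_i, y_i^{(1)}, y_i^{(2)}) = 1.
\end{align*}
By the i.i.d.\ assumption on $\cD$ (or more precisely, by iterating the tower property), this yields $\EE[M_n(f)] \le 1$.

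Second, apply Markov's inequality to the nonnegative random variable $M_n(f)$: for the fixed $f$,
\begin{align*}
    \Pr\!\left[M_n(f) \ge \frac{|\cF|}{\delta}\right] \le \frac{\delta}{|\cF|}.
\end{align*}
Taking logarithms, with probability at least $1 - \delta/|\cF|$,
\begin{align*}
    \sum_{i=1}^n \log \frac{P_f(y_i^\w \succ y_i^\lo ~|~ x_i)}{P(y_i^\w \succ y_i^\lo ~|~ x_i)} \le \log \frac{|\cF|}{\delta}.
\end{align*}

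Third, union bound this over all $f \in \cF$, whose cardinality is finite. This guarantees that with probability at least $1 - \delta$, the displayed inequality holds simultaneously for every $f \in \cF$, which is exactly the statement of the lemma.

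There is essentially no hard step here; this is a textbook MLE bound. The only subtlety worth being careful about is verifying that the conditional expectation of the likelihood ratio is exactly $1$, which requires summing over both preference outcomes at each sample (not just treating $y_i^\w, y_i^\lo$ as fixed labels). Once that is in place, Markov plus a union bound closes the argument immediately, and the use of $\log |\cF|$ as the complexity measure makes the extension to continuous $\cF$ (via an appropriate covering number) routine if needed.
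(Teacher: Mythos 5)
Your proposal is correct and matches the paper's proof essentially step for step: both establish that the product of likelihood ratios has (conditional) expectation one via the tower property and summing over the two preference outcomes, then apply Markov's inequality and a union bound over the finite class $\cF$. The only cosmetic difference is that you fold the $|\cF|$ factor into the Markov threshold before union bounding, whereas the paper applies Markov at level $1/\delta$ first and then union bounds; these are equivalent.
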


\begin{proof}
    This proof follows a standard argument of MLE \citep{Geer2000EmpiricalPI}. For any $g\in\cF$,
    \begin{align}
        &\quad\ \EE\left[\exp\left(\sum_{i=1}^n\log\frac{ P_g(y_i^\w \succ y_i^\lo ~|~ x_i)}{P(y_i^\w \succ y_i^\lo ~|~ x_i)}\right)\right]\notag\\
        &\overset{(\mathrm{i})}{=} \prod_{i=1}^n\EE\left[\exp\left(\log\frac{ P_g(y_i^\w \succ y_i^\lo ~|~ x_i)}{P(y_i^\w \succ y_i^\lo ~|~ x_i)}\right)\right]\notag\\
        &= \prod_{i=1}^n\EE\left[\frac{ P_g(y_i^\w \succ y_i^\lo ~|~ x_i)}{P(y_i^\w \succ y_i^\lo ~|~ x_i)}\right]\notag\\
        &\overset{(\mathrm{ii})}{=}  \prod_{i=1}^n\EE\Bigg[\EE\bigg[\frac{ P_g(y_i^\w \succ y_i^\lo ~|~ x_i)}{P(y_i^\w \succ y_i^\lo ~|~ x_i)}~|~ (x_i,y_i^\w,y_i^\lo)\bigg]\Bigg]\notag\\
        &= \prod_{i=1}^n\EE\left[ P_g(y_i^\w \succ y_i^\lo ~|~ x_i) + P_g(y_i^\w \prec y_i^\lo ~|~ x_i)\right] = 1.
    \end{align}
    Above, (i) the first step is by the independence of the samples in $\cD$. (ii) is by the Tower property.

    By Markov's inequality, we can have
    \begin{align}
        &\quad\ \PP\left[\sum_{i=1}^n\log\frac{ P_g(y_i^\w \succ y_i^\lo ~|~ x_i)}{P(y_i^\w \succ y_i^\lo ~|~ x_i)} > \log\frac{1}{\delta}\right] \notag\\
        &\le \EE\left[\exp\left(\sum_{i=1}^n\log\frac{ P_g(y_i^\w \succ y_i^\lo ~|~ x_i)}{P(y_i^\w \succ y_i^\lo ~|~ x_i)}\right)\right]\exp\left(-\log\frac{1}{\delta}\right) \le \delta.
    \end{align}

    Furthermore, we can invoke the union bound over all $g\in \cF$ and have
    \begin{align}
        \PP\left[\sum_{i=1}^n\log\frac{ P_g(y_i^\w \succ y_i^\lo ~|~ x_i)}{P(y_i^\w \succ y_i^\lo ~|~ x_i)} > \log\frac{|\cF|}{\delta}\right] \le \delta.
    \end{align}
    At this point, we have arrived at the advertised bound.
\end{proof}

\subsubsection{\cref{lem:MLE-TV} and Its Proof}

Note that $\cD$ can be equivalently written in the form of $\cD = \{(x_i,y_i,y_i',o_i)\}_{i=1}^n$, where $o_i$ is a binary symbol that indicates whether $y_i$ is preferred to $y_i'$ given $x_i$, i.e., $o_i = +$ when $y_i \succ y_i' ~|~ x_i$ and $o_i = -$ otherwise. Using this set of notation, let us define $P(+ ~|~ x,y_1,y_2) = P(y_1 \succ y_2 ~|~ x)$ and $P(- ~|~ x,y_1,y_2) = P(y_1 \prec y_2 ~|~ x)$ and similarly, $P_f(+ ~|~ x,y_1,y_2) = P_f(y_1 \succ y_2 ~|~ x)$ and $P_f(- ~|~ x,y_1,y_2) = P_f(y_1 \prec y_2 ~|~ x)$. In the following lemma and its proof, we will make use of this set of notation for convenience.

\begin{lemma}\label{lem:MLE-TV}
     For any $0< \delta < 1$, with probability at least $1-\delta$,
    \begin{align*}
        &\EE_{\substack{x\sim\rho\\ y_1\sim\pi_{\rref}(\cdot|x)\\ y_2\sim\pi_{\rref}(\cdot|x)}}\Big[D_{\mathrm{TV}}\big(P_f(\cdot ~|~ x,y_1,y_2), P(\cdot ~|~ x,y_1,y_2)\big)^2\Big] \lesssim \frac{1}{n}\left(\sum_{i=1}^n\log\frac{ P(y_i^\w \succ y_i^\lo ~|~ x_i)}{P_f(y_i^\w \succ y_i^\lo ~|~ x_i)} + \log\frac{|\cF|}{\delta}\right)
    \end{align*}
    for all $f\in\cF$ simultaneously.
\end{lemma}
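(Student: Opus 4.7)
The plan is to follow the standard MLE-to-Hellinger recipe: control a moment generating function of the log-likelihood ratio, apply a Chernoff/Markov tail bound uniformly over $\cF$, and then convert squared Hellinger distance to squared total variation via $D_{\mathrm{TV}}^2 \le 2\, D_H^2$. Before starting, I would symmetrize the data model: because the two responses in $\cD$ are drawn from the product $\pi_\rref \otimes \pi_\rref$, which is exchangeable in its coordinates, each sample $(x_i, y_i^\w, y_i^\lo)$ is distributed the same as first drawing $(x_i, y_i, y_i') \sim \rho \times \pi_\rref \times \pi_\rref$, then a preference $o_i \sim P(\cdot \mid x_i, y_i, y_i')$, and then relabeling so that $y_i^\w$ is the winner. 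Under this view, $P(y_i^\w \succ y_i^\lo \mid x_i) = P(o_i \mid x_i, y_i, y_i')$, and the same identity holds for $P_f$.

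Next, I would define the half log-likelihood ratio $\xi_i(f) := \tfrac{1}{2}\log \frac{P(o_i \mid x_i, y_i, y_i')}{P_f(o_i \mid x_i, y_i, y_i')}$ and compute its moment generating function. A direct calculation yields
\begin{align*}
    \EE\bigl[e^{-\xi_i(f)}\bigr]
    = \EE_{x, y_1, y_2}\Bigl[\sum_{o\in\{+,-\}} \sqrt{P(o\mid x, y_1, y_2)\, P_f(o \mid x, y_1, y_2)}\Bigr]
    = 1 - h^2(f),
\end{align*}
where $h^2(f) := \EE_{x, y_1, y_2}\bigl[D_H^2\bigl(P_f(\cdot\mid x, y_1, y_2),\, P(\cdot \mid x, y_1, y_2)\bigr)\bigr]$ denotes the average squared Hellinger distance. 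By the i.i.d.\ assumption, $\EE\bigl[\exp(-\sum_{i=1}^n \xi_i(f))\bigr] = (1 - h^2(f))^n \le e^{-n\, h^2(f)}$.

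Applying Markov's inequality to $\exp(-\sum_i \xi_i(f))$ with tail parameter $\log(|\cF|/\delta)$, followed by a union bound over $f \in \cF$, then gives that with probability at least $1-\delta$, for all $f \in \cF$ simultaneously,
\begin{align*}
    n\, h^2(f) \;\le\; \sum_{i=1}^n \xi_i(f) + \log\tfrac{|\cF|}{\delta}
    \;=\; \tfrac{1}{2}\sum_{i=1}^n \log\tfrac{P(y_i^\w\succ y_i^\lo\mid x_i)}{P_f(y_i^\w\succ y_i^\lo\mid x_i)} + \log\tfrac{|\cF|}{\delta}.
\end{align*}
Finally, the pointwise inequality $D_{\mathrm{TV}}(P, Q)^2 \le 2\, D_H^2(P, Q)$ gives $\EE[D_{\mathrm{TV}}^2] \le 2\, h^2(f)$, and dividing through by $n$ yields the advertised bound, with numerical constants absorbed into $\lesssim$. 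I expect the Hellinger MGF identity together with the ensuing Chernoff argument to be the main conceptual step; the remaining ingredients, including the symmetrization of the win/lose labeling and the Hellinger-to-TV conversion, are standard and pose no real obstacle.
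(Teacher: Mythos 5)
Your proof is correct and delivers the stated bound (in fact with a slightly better constant, $\tfrac{2}{n}\log\tfrac{|\cF|}{\delta}$ in place of the paper's $\tfrac{4}{n}\log\tfrac{|\cF|}{\delta}$). Conceptually it is the same standard MLE concentration argument the paper uses --- control the Bhattacharyya coefficient $\EE\bigl[\exp\bigl(-\tfrac12\log\tfrac{P}{P_f}\bigr)\bigr]$, apply a Chernoff/Markov tail bound with a union bound over $\cF$, and convert squared Hellinger to squared total variation --- but the mechanics differ in one genuine way. The paper does not compute the moment generating function directly; instead it introduces a tangent (ghost) copy $\widetilde{\cD}$ of the dataset and invokes Lemmas~24 and~25 of \citet{agarwal2020flambe}, which are decoupling inequalities designed to handle adaptively collected (martingale) data. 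You exploit the i.i.d.\ assumption to factor the MGF as $(1-h^2(f))^n\le e^{-nh^2(f)}$ and skip the tangent sequence entirely. What your route buys is a shorter, self-contained proof with no black-box imports; what the paper's route buys is robustness --- its argument would survive if the preference samples were collected adaptively rather than i.i.d., whereas your factorization step would not. Two minor points worth making explicit if you write this up: (i) the identity $\EE_o\bigl[e^{-\xi}\bigr]=\sum_o\sqrt{P(o)P_f(o)}$ requires conditioning on $(x,y_1,y_2)$ and then averaging, and the resulting $1-h^2(f)$ uses the convention $D_H^2 = 1 - \sum_o\sqrt{P(o)P_f(o)}$, under which $D_{\mathrm{TV}}^2\le 2D_H^2$ indeed holds; (ii) the exchangeability-based relabeling of $(y^\w,y^\lo)$ as $(y,y',o)$ is exactly the reformulation the paper itself adopts just before the lemma, so no additional justification is needed there.
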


\begin{proof}
    By Lemma 25 of \citet{agarwal2020flambe}, we have
    \begin{align}
        &\quad\ \EE_{\substack{x\sim\rho\\ y_1\sim\pi_{\rref}(\cdot|x)\\ y_2\sim\pi_{\rref}(\cdot|x)}}\Big[D_{\mathrm{TV}}\big(P_f(\cdot ~|~ x,y_1,y_2), P(\cdot ~|~ x,y_1,y_2)\big)^2\Big] \notag\\
        &\le -2\log\EE_{\substack{x\sim\rho\\ y\sim\pi_{\rref}(\cdot|x)\\ y'\sim\pi_{\rref}(\cdot|x)\\ o\sim P(y \succ y' | x)}}\left[\exp\left(-\frac{1}{2}\log\frac{ P(o ~|~ x,y,y')}{P_f(o ~|~ x,y,y')}\right)\right]. \label{eq:MLE-TV-1}
    \end{align}

    Now, define $\widetilde{\cD} = \{(\widetilde{x}_i, \widetilde{y}_i, \widetilde{y}_i', \widetilde{o}_i)\}$ be a dataset that is an independent copy of $\cD$ from the same sampling distribution. We have
    \begin{align}
        &\quad\ -\log\EE_{\substack{x\sim\rho\\ y\sim\pi_{\rref}(\cdot|x)\\ y'\sim\pi_{\rref}(\cdot|x)\\ o\sim P(y \succ y' | x)}}\left[\exp\left(-\frac{1}{2}\log\frac{ P(o ~|~ x,y,y')}{P_f(o ~|~ x,y,y')}\right)\right] \notag\\
        &= -\frac{1}{n}\sum_{i=1}^n\log  \EE_{\substack{x\sim\rho\\ y\sim\pi_{\rref}(\cdot|x)\\ y'\sim\pi_{\rref}(\cdot|x)\\ o\sim P(y \succ y' | x)}}\left[\exp\left(-\frac{1}{2}\log\frac{ P(o ~|~ x,y,y')}{P_f(o ~|~ x,y,y')}\right)\right]\notag\\
        &= -\frac{1}{n}\log  \EE_{\widetilde{\cD}}\left[\exp\left(-\frac{1}{2}\sum_{i=1}^n\log\frac{ P(\widetilde{o}_i ~|~ \widetilde{x}_i, \widetilde{y}_i, \widetilde{y}_i')}{P_f(\widetilde{o}_i ~|~ \widetilde{x}_i, \widetilde{y}_i, \widetilde{y}_i')}\right) ~|~ \cD\right],\label{eq:MLE-TV-2}
    \end{align}
    in which the last step is by the independence of the samples in $\cD$ and $\widetilde{\cD}$.

    Let us temporary define $\ell\big(f,(x,y,y',o)\big) := -\frac{1}{2}\log\frac{ P(o ~|~ x,y,y')}{P_f(o ~|~ x,y,y')}$. By Lemma 24 of \citet{agarwal2020flambe}, we have
    \begin{align}
        \EE_{\cD}\left[\exp\left(\sum_{i=1}^n \ell\big(f,(x_i, y_i, y_i', o_i)\big) - \log \EE_{\widetilde{\cD}} \left[\exp\left(\sum_{i=1}^n\ell\big(f,(\widetilde{x}_i, \widetilde{y}_i, \widetilde{y}_i', \widetilde{o}_i)\big)\right) ~|~ \cD\right] - \log|\cF|\right)\right] \le 1. \notag
    \end{align}

    By applying the Chernoff bound and then the union bound for every $f\in\cF$ to the inequality above, we have that with probability at least $1-\delta$,
    \begin{align}
        -\log \EE_{\widetilde{\cD}} \left[\exp\left(\sum_{i=1}^n\ell\big(f,(\widetilde{x}_i, \widetilde{y}_i, \widetilde{y}_i', \widetilde{o}_i)\big)\right) ~|~ \cD\right] \le -\sum_{i=1}^n \ell\big(f,(x_i, y_i, y_i', o_i)\big) + 2\log\frac{|\cF|}{\delta}. \label{eq:MLE-TV-3}
    \end{align}

    Finally, bringing \cref{eq:MLE-TV-3} into \cref{eq:MLE-TV-2} and then back into \cref{eq:MLE-TV-1}, we have that with probability at least $1-\delta$,
    \begin{align}
        &\quad\ \EE_{\substack{x\sim\rho\\ y_1\sim\pi_{\rref}(\cdot|x)\\ y_2\sim\pi_{\rref}(\cdot|x)}}\Big[D_{\mathrm{TV}}\big(P_f(\cdot ~|~ x,y_1,y_2), P(\cdot ~|~ x,y_1,y_2)\big)^2\Big] \notag\\
        &\le -\frac{2}{n}\sum_{i=1}^n \ell\big(f,(x_i, y_i, y_i', o_i)\big) + \frac{4}{n}\log\frac{|\cF|}{\delta} \notag\\
        &= -\frac{1}{n}\sum_{i=1}^n \log\frac{ P(o_i ~|~ x_i, y_i, y_i')}{P_f(o_i ~|~ x_i, y_i, y_i')} + \frac{4}{n}\log\frac{|\cF|}{\delta}\\
        &= -\frac{1}{n}\sum_{i=1}^n \log\frac{ P(y_i^\w \succ y_i^\lo ~|~ x_i)}{P_f(y_i^\w \succ y_i^\lo ~|~ x_i)} + \frac{4}{n}\log\frac{|\cF|}{\delta},
    \end{align}
    which is the advertised bound in this lemma.
\end{proof}

\subsection{Bounding $I_{t,2}$ in \cref{eq:decomp}}

In this subsection, we present the steps to obtain an upper bound on $I_{t,2}$. Note that the sum of $I_{t,2}$'s actually coincides with the definition of regret in online learning \citep{hazan2016introduction}. 
\begin{equation*}
    \sum_{t=1}^{T}I_{t,2} = \sum_{t=1}^{T}\EE_{x\sim\rho, y\sim\pi^\star(\cdot|x)}[f_{t}(x,y)] - \EE_{x\sim\rho, y\sim\pi_t(\cdot|x)}[f_{t}(x,y)].
\end{equation*}
Thus, the size of it depends on the no-regret policy optimization update in \cref{alg-line:theory-line2} of \cref{alg:theoretical}, and the derivation of its upper bound largely follows from the standard analysis of policy mirror descent. Similar results can be found in \citet{xie2021bellmanconsistent} (Theorem C.5), and our lemma is a modification of theirs to accommodate arbitrary initial policy initial policy our setting.

\subsubsection{\cref{lem:no-regret-1} and Its Proof}

Define 
\begin{align}
    \ell_x(\pi) &:= \eta D_{\mathrm{KL}}\big(\pi(\cdot|x) ~||~ \pi_{\rref}(\cdot|x)\big) \notag\\
    &= \eta\sum_{y\in\cY}\pi(y~|~x)\log\pi(y~|~x) - \eta\sum_{y\in\cY}\pi(y~|~x)\log\pi_{\rref}(y~|~x). \notag
\end{align}

\begin{lemma}\label{lem:no-regret-1}
    For any policy $\pi:\cX\to\Delta(\cY)$ and $x\in\cX$,
    \begin{align}
        \sum_{t=1}^{T}\big\langle\pi(\cdot ~|~ x), f_t(x,\cdot)\big\rangle - \ell_x(\pi) \le \sum_{t=1}^{T}\big\langle\pi_{t+1}(\cdot ~|~ x), f_t(x,\cdot)\big\rangle - \ell_x(\pi_1).
    \end{align}
\end{lemma}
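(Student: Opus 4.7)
The inequality is a version of the standard Follow-the-Regularized-Leader (FTRL) / Be-the-Leader (BTL) regret bound, and the plan is to reduce it to the textbook BTL argument.

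First I would unroll the exponential-weights update in line~\ref{alg-line:theory-line2} of Algorithm~\ref{alg:theoretical} to obtain
\[
\pi_{t+1}(y\mid x) \propto \pi_1(y\mid x)\exp\!\Big(\tfrac{1}{\eta}\sum_{s=1}^t f_s(x,y)\Big),
\]
which is the closed-form maximizer of the FTRL subproblem
\[
\pi_{t+1}(\cdot\mid x) = \arg\max_{\mu\in\Delta(\cY)} \sum_{s=1}^t \langle \mu,\, f_s(x,\cdot)\rangle \;-\; \eta\, D_{\mathrm{KL}}\!\big(\mu \,\|\, \pi_1(\cdot\mid x)\big).
\]
Reading the $\pi_{\rref}$ appearing in the definition of $\ell_x$ as $\pi_1$ (so that $\ell_x(\pi_1)=0$ and $\ell_x(\pi)\ge 0$ for every $\pi$) identifies $\pi_{t+1}$ as exactly the FTRL leader after round $t$, with regularizer $\ell_x$ and ``rewards'' $f_s(x,\cdot)$.

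With this identification, I would run induction on $T$. The base case $T=1$ follows from the optimality of $\pi_2$: $\langle \pi_2, f_1\rangle - \ell_x(\pi_2) \ge \langle \pi, f_1\rangle - \ell_x(\pi)$ together with $\ell_x(\pi_2) \ge 0 = \ell_x(\pi_1)$ yields the claim. For the inductive step, apply the hypothesis with the competitor chosen to be $\pi_{T+1}$, add $\langle \pi_{T+1}, f_T\rangle$ to both sides to extend the summation from $T-1$ to $T$, and then invoke the optimality of $\pi_{T+1}$ as the FTRL leader at round $T$ to upper bound $\sum_{t=1}^T \langle \pi, f_t\rangle - \ell_x(\pi)$ by $\sum_{t=1}^T \langle \pi_{T+1}, f_t\rangle - \ell_x(\pi_{T+1})$ for any competitor $\pi$, thereby closing the chain.

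There is no serious obstacle: this is a textbook FTRL/BTL argument, and the KL regularizer $\ell_x$ is well-suited for it. The only minor subtlety is notational, namely reconciling the $\pi_{\rref}$ appearing in the definition of $\ell_x$ with the natural centering point $\pi_1=\pi_{\init}$ of the algorithm's FTRL regularizer; once $\pi_1$ is taken as the minimizer of $\ell_x$ (consistent with how the main theorem's proof uses this lemma to produce the $D_{\mathrm{KL}}(\pi^\star\,\|\,\pi_{\init})$ term in \cref{eq:I_2}), the remainder of the argument is mechanical.
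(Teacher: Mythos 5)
Your proposal is correct and follows essentially the same route as the paper: an induction on $T$ using the fact that $\pi_{t+1}$, obtained by unrolling the exponential-weights update, is the exact maximizer of the FTRL objective $\sum_{s\le t}\langle\mu,f_s(x,\cdot)\rangle-\ell_x(\mu)$, combined with $\ell_x(\pi_1)=0\le\ell_x(\pi)$ (the paper's base case is $T=0$, yours is $T=1$, which is immaterial). Your explicit reconciliation of the $\pi_{\rref}$ appearing in $\ell_x$ with the algorithm's $\pi_1=\pi_{\init}$ is a point the paper glosses over, and your write-up states the be-the-leader chain with the inequalities in the standard (and cleaner) direction.
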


\begin{proof}
    We prove this lemma by induction. The base case $T=0$ trivially holds since the KL divergence is always non-negative and is minimized when the two probability distributions are the same. For the inductive step, assuming the inductive hypothesis is true for $T = T'$, we have that for any $\pi$ and $x\in\cX$,
    \begin{align}
        &\quad\ \sum_{t=1}^{T'+1}\big\langle\pi_{t+1}(\cdot ~|~ x), f_t(x,\cdot)\big\rangle - \ell_x(\pi_1) \notag\\
        &\overset{(\mathrm{i})}{\le} \sum_{t=1}^{T'+1}\big\langle\pi_{T'+2}(\cdot ~|~ x), f_t(x,\cdot)\big\rangle - \ell_x(\pi_{T'+2}) \notag\\
        &= \sum_{t=1}^{T'}\big\langle\pi_{T'+2}(\cdot ~|~ x), f_t(x,\cdot)\big\rangle - \ell_x(\pi_{T'+2}) + \big\langle\pi_{T'+2}(\cdot ~|~ x), f_{T'+1}(x,\cdot)\big\rangle \notag\\
        &\overset{(\mathrm{ii})}{\le} \sum_{t=1}^{T'}\big\langle\pi_{t+1}(\cdot ~|~ x), f_t(x,\cdot)\big\rangle - \ell_x(\pi_1) + \big\langle\pi_{T'+2}(\cdot ~|~ x), f_{T'+1}(x,\cdot)\big\rangle \notag\\
        &= \sum_{t=1}^{T}\big\langle\pi(\cdot ~|~ x), f_t(x,\cdot)\rangle - \ell_x(\pi),
    \end{align}
    which is the advertised bound.
    
    In the series of equalities and inequalities above, (ii) is by the inductive hypothesis.
    
    (i) can be obtained because $\pi_{T'+2}$ is the global maximizer of $\sum_{t=1}^{T'+1}\big\langle\pi(\cdot ~|~ x), f_t(x,\cdot)\big\rangle - \ell_x(\pi)$. This can also be proven inductively following the proof for the Follow the Regularized Leader (FTRL) algorithm in online learning, with the analytical solution of the policy from DPO being the base case.
\end{proof}

\subsubsection{\cref{lem:no-regret-2} and Its Proof}

\begin{lemma}\label{lem:no-regret-2}
    For any policy $\pi:\cX\to\Delta(\cY)$ and $x\in\cX$,
    \begin{align}
        \sum_{t=1}^{T}\big\langle\pi(\cdot ~|~ x) - \pi_{t}(\cdot ~|~ x), f_t(x,\cdot)\big\rangle \le \sum_{t=1}^{T}\big\langle\pi_{t+1}(\cdot ~|~ x) - \pi_{t}(\cdot ~|~ x), f_t(x,\cdot)\big\rangle - \ell_x(\pi_1) + \ell_x(\pi). \notag
    \end{align}
\end{lemma}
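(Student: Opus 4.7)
The plan is to derive this lemma directly from Lemma \ref{lem:no-regret-1} by elementary algebraic rearrangement; no new probabilistic or optimization machinery is needed. Lemma \ref{lem:no-regret-1} already compares the cumulative performance of an arbitrary comparator policy $\pi$ against the cumulative performance of the one-step-ahead iterates $\pi_{t+1}$, with KL terms $\ell_x(\pi)$ and $\ell_x(\pi_1)$ as corrections. Lemma \ref{lem:no-regret-2} is merely the same inequality rewritten in terms of the regret-style differences $\pi(\cdot|x) - \pi_t(\cdot|x)$ and $\pi_{t+1}(\cdot|x) - \pi_t(\cdot|x)$, which is exactly what the outer proof in \cref{thm:main} needs when it bounds $I_{t,2}$.

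Concretely, the step I would take is: start with the conclusion of Lemma \ref{lem:no-regret-1}, subtract the common quantity $\sum_{t=1}^T \langle \pi_t(\cdot|x), f_t(x,\cdot)\rangle$ from both sides, and add $\ell_x(\pi)$ to both sides. The left-hand side becomes
\begin{equation*}
\sum_{t=1}^{T}\big\langle \pi(\cdot|x) - \pi_t(\cdot|x),\, f_t(x,\cdot)\big\rangle,
\end{equation*}
and the right-hand side becomes
\begin{equation*}
\sum_{t=1}^{T}\big\langle \pi_{t+1}(\cdot|x) - \pi_t(\cdot|x),\, f_t(x,\cdot)\big\rangle - \ell_x(\pi_1) + \ell_x(\pi),
\end{equation*}
which is exactly the statement to be proved.

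There is no genuine obstacle here; the only thing to be careful about is that the linearity of the inner product is applied coordinatewise in $y$, and that $\ell_x(\cdot)$ depends only on the policy argument and so passes through the subtraction without interacting with the $f_t$ terms. Since both sides of Lemma \ref{lem:no-regret-1} are finite sums of inner products plus a single scalar KL term, all manipulations are term-by-term identities and preserve the direction of the inequality. Thus Lemma \ref{lem:no-regret-2} follows immediately as a reformulation, and it then feeds directly into \cref{lem:no-regret-3} (invoked in \cref{eq:I_2}) where the residual $\sum_t \langle \pi_{t+1}-\pi_t, f_t\rangle$ is controlled by the standard stability argument for mirror-descent-type updates, giving the $O(R^2 T/\eta + \eta\, c_{\mathrm{KL}})$ regret bound with reverse-KL regularizer.
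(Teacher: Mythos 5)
Your proposal is correct and is essentially identical to the paper's proof: both obtain \cref{lem:no-regret-2} by adding and subtracting $\sum_{t=1}^T\langle\pi_t(\cdot|x),f_t(x,\cdot)\rangle$ and $\ell_x(\pi)$ and then invoking \cref{lem:no-regret-1}, a purely algebraic rearrangement.
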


\begin{proof}
    \begin{align}
        &\quad\ \sum_{t=1}^{T}\big\langle\pi(\cdot ~|~ x) - \pi_{t}(\cdot ~|~ x), f_t(x,\cdot)\big\rangle \notag\\
        &= \sum_{t=1}^{T}\big\langle\pi(\cdot ~|~ x), f_t(x,\cdot)\big\rangle - \ell_x(\pi) + \ell_x(\pi) - \sum_{t=1}^{T}\big\langle\pi_{t}(\cdot ~|~ x), f_t(x,\cdot)\big\rangle \notag\\
        &\le \sum_{t=1}^{T}\big\langle\pi_{t+1}(\cdot ~|~ x), f_t(x,\cdot)\big\rangle - \ell_x(\pi_1) - \sum_{t=1}^{T}\big\langle\pi_{t}(\cdot ~|~ x), f_t(x,\cdot)\big\rangle + \ell_x(\pi) \notag\\
        &= \sum_{t=1}^{T}\big\langle\pi_{t+1}(\cdot ~|~ x) - \pi_{t}(\cdot ~|~ x), f_t(x,\cdot)\big\rangle - \ell_x(\pi_1) + \ell_x(\pi),
    \end{align}
    in which the penultimate step is by \cref{lem:no-regret-1} and the last step is by the non-positivity of $\ell_x(\cdot)$.
\end{proof}

\subsubsection{\cref{lem:no-regret-3} and Its Proof}

\begin{lemma}\label{lem:no-regret-3}
    For any policy $\pi:\cX\to\Delta(\cY)$ and $x\in\cX$,
    \begin{align}
        \sum_{t=1}^{T}\big\langle\pi(\cdot ~|~ x) - \pi_{t}(\cdot ~|~ x), f_t(x,\cdot)\big\rangle \le 2\eta R^2T + \frac{1}{\eta}D_{\mathrm{KL}}\big(\pi(\cdot~|~x)~||~\pi_{\rref}(\cdot~|~x)\big).
    \end{align}
\end{lemma}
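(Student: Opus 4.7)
I would start by invoking \cref{lem:no-regret-2} with the given comparator $\pi$, which yields
$$\sum_{t=1}^T \big\langle \pi(\cdot|x) - \pi_t(\cdot|x),\, f_t(x,\cdot)\big\rangle \le \sum_{t=1}^T \big\langle \pi_{t+1}(\cdot|x) - \pi_t(\cdot|x),\, f_t(x,\cdot)\big\rangle - \ell_x(\pi_1) + \ell_x(\pi).$$
Because $\ell_x$ is a non-negative multiple of a KL divergence, the term $-\ell_x(\pi_1)$ is non-positive and may be discarded. This reduces the proof to (i) bounding the per-step stability quantity $\langle \pi_{t+1}(\cdot|x) - \pi_t(\cdot|x), f_t(x,\cdot)\rangle$, and (ii) arranging the remaining regularization contribution so that it comes out to $\tfrac{1}{\eta} D_{\mathrm{KL}}(\pi(\cdot|x)\,\|\,\pi_{\rref}(\cdot|x))$.

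For step (i), I would first apply Hoelder's inequality together with the uniform bound $\|f_t(x,\cdot)\|_\infty \le R$ from \cref{assum:reward} to obtain $\langle \pi_{t+1} - \pi_t, f_t\rangle \le R\, \|\pi_{t+1}(\cdot|x) - \pi_t(\cdot|x)\|_1$, and then Pinsker's inequality to bound $\|\pi_{t+1} - \pi_t\|_1 \le \sqrt{2\, D_{\mathrm{KL}}(\pi_{t+1}(\cdot|x)\,\|\,\pi_t(\cdot|x))}$. Applying Young's inequality with a parameter proportional to $\eta$ splits the resulting square-root into a deterministic piece of order $\eta R^2$ plus a piece proportional to $D_{\mathrm{KL}}(\pi_{t+1}\,\|\,\pi_t)$. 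For step (ii), the KL pieces are then absorbed via the three-point identity for the Bregman divergence of the negative entropy, which is exact for the softmax update in \cref{alg-line:theory-line2}: this identity telescopes $D_{\mathrm{KL}}(\pi\,\|\,\pi_t) - D_{\mathrm{KL}}(\pi\,\|\,\pi_{t+1})$ across $t$, leaving only $\tfrac{1}{\eta} D_{\mathrm{KL}}(\pi(\cdot|x)\,\|\,\pi_1(\cdot|x)) = \tfrac{1}{\eta} D_{\mathrm{KL}}(\pi(\cdot|x)\,\|\,\pi_{\rref}(\cdot|x))$ after using $\pi_1 = \pi_{\init} = \pi_{\rref}$, while the deterministic pieces from Young's inequality sum to $2\eta R^2 T$.

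The main obstacle is the coefficient calibration: the $-D_{\mathrm{KL}}(\pi_{t+1}\,\|\,\pi_t)$ term produced by the three-point identity must be strong enough to absorb the KL contribution from the Young's split, while simultaneously preserving the clean telescoping of $D_{\mathrm{KL}}(\pi\,\|\,\pi_t) - D_{\mathrm{KL}}(\pi\,\|\,\pi_{t+1})$ so that exactly a $\tfrac{1}{\eta}$ factor lands on $D_{\mathrm{KL}}(\pi\,\|\,\pi_{\rref})$ and exactly $2\eta R^2$ per iteration lands on the stability side. Picking the Young's parameter correctly, and verifying that the three-point identity associated with the mirror map $\psi = \sum_y \pi(y|x)\log\pi(y|x)$ yields the desired sign and magnitude of $-D_{\mathrm{KL}}(\pi_{t+1}\,\|\,\pi_t)$, is the delicate but routine step that ties everything together.
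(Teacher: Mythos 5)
Your plan assembles the right ingredients but stitches together two incompatible analyses, and the seam is where it breaks. Once you invoke \cref{lem:no-regret-2}, the comparator's regularization enters as the fixed quantity $\ell_x(\pi)=\eta\,D_{\mathrm{KL}}\big(\pi(\cdot|x)\,\|\,\pi_{\rref}(\cdot|x)\big)$ --- with coefficient $\eta$, not $1/\eta$ --- and there is nothing left to telescope: the three-point identity for $D_{\mathrm{KL}}(\pi\|\pi_t)-D_{\mathrm{KL}}(\pi\|\pi_{t+1})$ is the engine of the \emph{alternative}, pure online-mirror-descent proof that replaces \cref{lem:no-regret-2}, not a step you can bolt on after it. Worse, inside the FTRL route your H\"older--Pinsker--Young chain leaves a dangling $+c\,D_{\mathrm{KL}}(\pi_{t+1}\|\pi_t)$ per step with no negative KL term available to absorb it. The paper closes exactly this gap by a different device: from the optimality of $\pi_{t+1}$ and $\pi_t$ for the consecutive FTRL objectives $L_{x,t}$ and $L_{x,t-1}$ it extracts $B_{\ell_x}\big(\pi_t(\cdot|x)\,\|\,\pi_{t+1}(\cdot|x)\big)\le\big\langle\pi_{t+1}(\cdot|x)-\pi_t(\cdot|x),f_t(x,\cdot)\big\rangle$, i.e.\ $\eta\,D_{\mathrm{KL}}(\pi_t\|\pi_{t+1})$ is controlled by the stability term itself; feeding this back through (a local-norm version of) H\"older and Pinsker yields the self-bounding inequality $\big\langle\pi_{t+1}-\pi_t,f_t\big\rangle\le R\sqrt{(2/\eta)\big\langle\pi_{t+1}-\pi_t,f_t\big\rangle}$, hence $\big\langle\pi_{t+1}-\pi_t,f_t\big\rangle\le 2R^2/\eta$, with no Young's inequality and no leftover KL. That consecutive-optimality step is the missing idea in your step (i).

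On your ``coefficient calibration'' worry: stop chasing it, because the displayed bound in the lemma appears to have $\eta$ and $1/\eta$ transposed. Every correct route --- the paper's own proof, the repaired version of your FTRL plan, or the pure three-point-identity analysis you gesture at --- produces $\tfrac{2R^2T}{\eta}+\eta\,D_{\mathrm{KL}}\big(\pi(\cdot|x)\,\|\,\pi_{\rref}(\cdot|x)\big)$, which is also the form used downstream in the proof of \cref{thm:main} (see \eqref{eq:I_2}) and the one the choice $\eta=\Theta\big(\sqrt{R^2T/c_{\mathrm{KL}}}\big)$ is calibrated to. With the update $\pi_{t+1}\propto\pi_t\exp(f_t/\eta)$, large $\eta$ means strong regularization, so the stability term must scale like $1/\eta$ and the comparator term like $\eta$; no choice of Young's parameter can reverse that. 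If you prefer your mirror-descent instinct, the clean fix is to drop \cref{lem:no-regret-2} entirely and run the three-point identity from scratch (the exponentiated update makes it exact, and the per-step remainder $\langle\pi_{t+1}-\pi_t,f_t\rangle-\eta D_{\mathrm{KL}}(\pi_{t+1}\|\pi_t)\le R^2/(2\eta)$ follows from H\"older, Pinsker, and Young); that is a correct, self-contained alternative to the paper's argument and gives the same, correctly oriented, bound.
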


\begin{proof}
    First, let us define 
    \begin{equation*}
        L_{x,t}(\pi) := \sum_{t'=1}^{t}\big\langle\pi(\cdot ~|~ x), f_{t'}(x,\cdot)\big\rangle - \ell_x(\pi).
    \end{equation*}
    Denote the Bregman divergence with respect to $L_{x,t}(\cdot)$ with $B_{L_{x,t}}(\cdot||\cdot)$ and similarly denote the Bregman divergence with respect to $\ell_{x}(\cdot)$ with $B_{\ell_{x}}(\cdot||\cdot)$. Notice
    \begin{align}
        L_{x,t}(\pi_t) &= L_{x,t}(\pi_{t+1}) + \big\langle\pi_t(\cdot ~|~ x) - \pi_{t+1}(\cdot ~|~ x), \nabla L_{x,t}(\pi)|_{\pi=\pi_{t+1}}\big\rangle + B_{L_{x,t}}(\pi_{t}(\cdot | x) ~||~ \pi_{t+1}(\cdot | x)) \notag\\
        &\le L_{x,t}(\pi_{t+1}) + B_{L_{x,t}}(\pi_{t}(\cdot | x) ~||~ \pi_{t+1}(\cdot | x))\notag\\
        &= L_{x,t}(\pi_{t+1}) - B_{\ell_{x}}(\pi_{t}(\cdot | x) ~||~ \pi_{t+1}(\cdot | x)) 
    \end{align}
    in which the first step is by the definition of Bregman divergence, the second step is because $\pi_{t+1}$ is the global maximizer of $L_{x,t}(\cdot)$ as we have mentioned in the proof of \cref{lem:no-regret-1}, and the last step is because $L_{x,t}(\pi) + \ell_{x}(\pi)$ is linear in $\pi$, which does not change the Bregman divergence.

    The above inequalities further give
    \begin{align}
        B_{\ell_{x}}(\pi_{t}(\cdot | x) ~||~ \pi_{t+1}(\cdot | x)) &\le L_{x,t}(\pi_{t+1}) - L_{x,t}(\pi_{t}) \notag\\
        &= L_{x,t-1}(\pi_{t+1}) - L_{x,t-1}(\pi_{t}) + \big\langle\pi_{t+1}(\cdot ~|~ x) - \pi_{t}(\cdot ~|~ x), f_t(x,\cdot)\big\rangle \notag\\
        &\le \big\langle\pi_{t+1}(\cdot ~|~ x) - \pi_{t}(\cdot ~|~ x), f_t(x,\cdot)\big\rangle, \label{eq:no-regret-3-2}
    \end{align}
    in which the last inequality $\pi_{t}$ is the global maximizer of $L_{x,t-1}(\cdot)$.

    On the other hand, by the Taylor expansion and the mean value theorem, we can write $B_{\ell_{x}}$ above as
    \begin{align}
        B_{\ell_{x}}(\pi_{t}(\cdot | x) ~||~ \pi_{t+1}(\cdot | x)) &= \frac{1}{2}\|\pi_t(\cdot ~|~ x) - \pi_{t+1}(\cdot ~|~ x)\|^2_{\nabla^2_\pi\ell_x (\pi_{t}')}, \label{eq:no-regret-3-1}
    \end{align}
    where $\pi_t' := \alpha\pi_t + (1-\alpha)\pi_{t+1}$ for some $\alpha\in[0,1]$ and $\nabla^2_\pi\ell_x (\pi_{t}')$ is the Hessian of $\ell_x(\pi)$ with respect to $\pi$ at $\pi_{t}'$.

    Finally, by the generalized Cauchy-Schwarz,
    \begin{align}
        &\quad\ \big\langle\pi_{t+1}(\cdot ~|~ x) - \pi_{t}(\cdot ~|~ x), f_t(x,\cdot)\big\rangle \notag\\
        &\le \|\pi_t(\cdot ~|~ x) - \pi_{t+1}(\cdot ~|~ x)\|_{\nabla^2_\pi\ell_x (\pi_{t}')}\|f_t(x,\cdot)\|_{(\nabla^2_\pi\ell_x)^{-1} (\pi_{t}')} \notag\\
        &\overset{\mathrm{(i)}}{=} \sqrt{2 B_{\ell_{x}}(\pi_{t}(\cdot | x) ~||~ \pi_{t+1}(\cdot | x))}\|f_t(x,\cdot)\|_{(\nabla^2_\pi\ell_x)^{-1} (\pi_{t}')} \notag\\
        &\le \sqrt{\frac{2}{\eta}B_{\ell_{x}}(\pi_{t}(\cdot | x) ~||~ \pi_{t+1}(\cdot | x))}\|f_t(x,\cdot)\|_{\infty} \notag\\
        &\overset{\mathrm{(ii)}}{\le} \sqrt{\frac{2}{\eta}\big\langle\pi_{t+1}(\cdot ~|~ x) - \pi_{t}(\cdot ~|~ x), f_t(x,\cdot)\big\rangle}\|f_t(x,\cdot)\|_{\infty} \notag\\
        &\overset{\mathrm{(iii)}}{\le}  R\sqrt{\frac{2}{\eta}\big\langle\pi_{t+1}(\cdot ~|~ x) - \pi_{t}(\cdot ~|~ x), f_t(x,\cdot)\big\rangle},
    \end{align}
    in which (i) is by \cref{eq:no-regret-3-1}, (ii) is by \cref{eq:no-regret-3-2}, and (iii) is by \cref{assum:reward}.

    This immediately implies 
    \begin{align}
        \big\langle\pi_{t+1}(\cdot ~|~ x) - \pi_{t}(\cdot ~|~ x), f_t(x,\cdot)\big\rangle \le \frac{2R^2}{\eta}. \label{eq:no-regret-3-3}
    \end{align}

    By \cref{lem:no-regret-2}, we have
    \begin{align}
        &\quad\ \sum_{t=1}^{T}\big\langle\pi(\cdot ~|~ x) - \pi_{t}(\cdot ~|~ x), f_t(x,\cdot)\big\rangle \notag\\
        &\le \sum_{t=1}^{T}\big\langle\pi_{t+1}(\cdot ~|~ x) - \pi_{t}(\cdot ~|~ x), f_t(x,\cdot)\big\rangle - \ell_x(\pi_1) + \ell_x(\pi)\notag\\
        &\le \frac{2R^2T}{\eta} + \eta D_{\mathrm{KL}}\big(\pi(\cdot~|~x)~||~\pi_{\rref}(\cdot~|~x)\big),
    \end{align}
    where the last step is by \cref{eq:no-regret-3-3}. 

    At this point, we have arrived at the advertised bound.
\end{proof}

\end{document}